\DeclareMathOperator*{\argmin}{arg\,min}
\newcommand{\myparatight}[1]{\smallskip\noindent{\bf {#1}:}~}
\newtheorem{theorem}{Theorem}
\newtheorem{problem}{Problem}
\begin{document}
%
\title{Robust Multi-subspace Analysis Using Novel Column $L_0$-norm Constrained Matrix Factorization}
%
%
%


\author{Binghui~Wang,~\IEEEmembership{Student~Member,~IEEE,}
       and~Chuang~Lin,~\IEEEmembership{Member,~IEEE}
\thanks{Binghui Wang is with the Electrical and Computer Engineering Department, Iowa State University, Ames, IA, 50010 USA, e-mail: binghuiw@iastate.edu.}
\thanks{Chuang Lin is with the Shenzhen Institute of Advanced Technology, Chinese Academy of Science, Shenzhen, China, e-mail: chuang.lin@siat.ac.cn.}
\thanks{Binghui Wang and Chuang Lin contributed equally to this work.}
\thanks{The preliminary work was done in Dalian University of Technology, China, and was further extended and completed in Iowa State University, USA.}
\thanks{Preliminary results of the proposed method have been published in~\cite{wang2015}.}} 

\maketitle

\begin{abstract}
We study the underlying structure of data (approximately) generated from a union of independent subspaces. Traditional methods learn only one subspace, failing to discover the multi-subspace structure, while state-of-the-art methods analyze the multi-subspace structure using data themselves as the dictionary, which cannot offer the explicit basis to span each subspace and are sensitive to errors via an indirect representation. Additionally, they also suffer from a high computational complexity, being quadratic or cubic to the sample size. 

To tackle all these problems, we propose a method, called Matrix Factorization with Column $L_0$-norm constraint (MFC$_0$), that can simultaneously \emph{learn the basis} for each subspace, \emph{generate direct sparse representation} for each data sample, as well as \emph{removing errors} in the data \emph{in an efficient way}. 
Furthermore, we develop a first-order alternating direction algorithm, whose computational complexity is linear to the sample size, to stably and effectively solve the nonconvex objective function and nonsmooth $l_0$-norm constraint of MFC$_0$. 
Experimental results on both synthetic and real-world datasets demonstrate that besides the superiority over traditional and state-of-the-art methods for subspace clustering, data reconstruction, error correction, MFC$_0$ also shows its uniqueness for multi-subspace basis learning and direct sparse representation.

\end{abstract}

\begin{IEEEkeywords}
Robust multi-subspace analysis, matrix factorization, multi-subspace basis learning, $l_0$-norm sparse representation, alternating direction algorithm, proximal algorithm.
\end{IEEEkeywords}

%
\IEEEpeerreviewmaketitle

\section{Introduction}

\IEEEPARstart{T}{he} observed data are extremely high dimensional in this ``big data" era. Typically, these data reside in a much lower-dimensional latent subspace, instead of being uniformly distributed in the high-dimensional ambient space. Thus, it is of great importance to reveal the underlying structure of the data as it helps to reduce the computational cost and enables a compact representation for learning. Such an idea has been successfully applied to various reseach communities, e.g., dimensionality reduction~\cite{tenenbaum2000global, roweis2000nonlinear, belkin2003laplacian}, face recognition~\cite{belhumeur1997eigenfaces, he2005face, wang2014neighbourhood}, metric learning~\cite{weinberger2005distance}, etc..

Subspace methods have been widely used to analyze the data, and \emph{linear subspace}\footnote{A vector space is a subset of some other higher-dimensional vector space.} is the most common choice for its simplicity and computational efficiency. Additionally, linear subspace has shown its effectiveness in modeling real-world problems such as motion segmentation~\cite{rao2010motion,yan2006general}, face clustering~\cite{Liu:LLRR,basri2003lambertian}, and handwritten digits recognition~\cite{tangstructure}. Consequently, subspace analysis has been paid much attention in the past decades. For instance, principal component analysis (PCA) aims to learn a subspace while retaining maximal variances of the data. Nonnegative matrix factorization (NMF)~\cite{lee1999learning} is designed to learn both the nonnegative basis and nonnegative parts-based representation.
Robust PCA (RPCA)~\cite{Candes:RPCA} assumes that the data are approximately drawn from a low-rank subspace while perturbed by sparse noise. The basic assumption of these methods is the \emph{single} subspace, which is not the case in many practical applications. A more reasonable way is to consider the data as lying on or near a union of linear subspaces. Unfortunately, the generalization to handle multiple subspaces is quite challenging.

Recently, multi-subspace analysis has attracted increasingly interests in visual data analysis~\cite{liu2014learning,liu2010unsupervised,vidal2005generalized}. Derived from recent advances in compressive sensing~\cite{candes2009exact,donoho2006most}, the methods including~
\cite{SSC:PAMI2014,LRR:PAMI2014,vidal2014low,liu2012fixed,ni2010robust,Peng:2015:SCU,liu2017blessing} 
have incorporated sparse and/or low-rank regularization into their formulations to model the mixture of linear structures for clean data\footnote{Data points are strictly sampled from the respective subspace.} as well as dealing with errors in data, e.g., noise~\cite{candes2010matrix}, missed entries~\cite{candes2009exact}, corruptions~\cite{Candes:RPCA}, and outliers~\cite{xu2010robust}. Among them, sparse subspace clustering (SSC)~\cite{SSC:PAMI2014} and low-rank representation (LRR)~\cite{LRR:PAMI2014} stand out as two most popular methods, which formulate the discovery of multi-subspace structure as finding a sparse or low-rank representation of the data samples using data themselves as the dictionary (basis). It has been shown in literatures that these methods can achieve more accurate subspace structure than single subspace analysis methods. However, both SSC and LRR have the following major drawbacks: First, they use the original data as the basis rather than learning the basis explicitly. It is problematic when data contain errors yet are still used for reconstruction or/and clustering. Second, neither sparse representation via $l_1$ regularization nor low-rank representation in a global constraint provides a direct description for each data sample. In other words, data samples cannot find the subspace where they lie, and their contained errors are unable to be removed. Thirdly, SSC and LRR suffer from the computational complexity that is quadratic and cubic to the sample size, respectively. 

\textbf{Our work:} In this paper, to tackle all above problems, we aim to simultaneously \emph{learn the basis} for each subspace and \emph{directly generate sparse representation} for each data sample, as well as \emph{removing errors} (e.g., random corruptions\footnote{A fraction of random entries of data are grossly corrupted.} and sample-specific outliers\footnote{A fraction of data are far away from their respective subspaces.}) in the data \emph{in an efficient way}. To this end, we propose a novel column $L_0$-norm constrained matrix factorization (MFC$_0$) method, and develop a first-order alternating direction algorithm to stably and efficiently solve the nonsmooth and nonconvex objective function of MFC$_0$. 

Specifically, given a collection of data approximately generated from a union of independent subspaces with an equal subspace dimension, we explore their subspace structure from the \emph{matrix factorization} perspective in the following way: 
First, we restrict the basis matrix to be orthonormal, which guarantees that the learnt basis indeed span multiple subspaces.
Then, we pursue a direct sparse representation of each data sample on only the basis that form its underlying subspace. 
To achieve this goal, we leverage an \emph{$l_0$-norm} to confine the number of nonzeros elements of each column of the representation matrix. 
Furthermore, to handle different types of errors contained in the data, we incorporate different regularizations. 

With above constraints, the objective function of MFC$_0$ is nonconvex on the coupled basis and representation matrix, and is nonsmooth on the $l_0$-norm constraint. Thus, it is a very challenging optimization problem that cannot be solved via typical gradient (or subgradient) based methods. To overcome the nonconvex problem, we develop a first-order optimization algorithm motivated by ADMM~\cite{boyd2011distributed} to split multivariable objective function into several univariate subproblems and solve each subproblem one by one. To tackle the subproblem on nonsmooth $l_0$-norm constraint, we design a novel proximal operator, which is inspired by recent proximal algorithms~\cite{parikh2013proximal}, and solve it efficiently and analyticly. 

After learning basis and representation matrix, MFC$_0$ can accomplish many tasks, e.g., subspace clustering, data reconstruction, error correction, etc.. We evaluate MFC$_0$ on both synthetic data and real-world datasets. Experimental results demonstrate that besides the superiority and efficiency over traditional and state-of-the-art methods for subspace clustering, data reconstruction, and error correction, MFC0 also demonstrates its uniqueness for multi-subspace basis learning and direct representation learning. 


In summary, our key contributions are as follows:
\begin{itemize}
\item We propose a novel MFC$_0$ method from the matrix factorization perspective to perform multi-subspace analysis. To the best of our knowledge, this is the first work that considers simultaneously learning the basis, generating direct sparse representation, and correcting errors of data generated from multiple subspaces. 
\item We develop a first-order alterating direction algorithm to stably and effectively solve the nonconvex objective function of MFC$_0$. 
\item We design a novel proximal operator to analyticly and efficiently solve the nonsmooth $l_0$-norm constraint.
\item Experimental results demonstrate that MFC$_0$ shows high resistance to errors,  
achieves high subspace clustering performance, learns multi-subspace basis, and generates direct sparse representation.
\end{itemize}

\section{Related Work}

\myparatight{Matrix factorization and its variants}
Matrix Factorization (MF) has been studied for several decades. Classical methods, such as Principal Component Analysis (PCA), Linear Discriminant Analysis (LDA), Independent Component Analysis (ICA), are exemplars of low-rank matrix factorizations. Afterwards, to solve specific problems, many MF variants, including Concept MF~\cite{liu2014constrained}, Maximum Margin MF~\cite{srebro2004maximum}, Convex MF~\cite{ding2010convex}, and Online MF~\cite{mairal2010online}, etc., are proposed. These methods share in common that there is no constraint on the sign of elements of factorized matrices. In contrast, a paradigm of factorization, called NMF, was first proposed in~\cite{lee1999learning} and has been widely used for its \emph{sparse, parts-based, and additive representation}. NMF is suitable for many research fields, e.g., data mining, image processing, pattern recognition, and machine learning. Its real-world applications include document clustering~\cite{xu2003document}, image segmentation~\cite{sandler2011nonnegative}, image recognition~\cite{zafeiriou2006exploiting,wang2013graph}, speech and audio processing~\cite{fevotte2009nonnegative}, blind source separation~\cite{cichocki2006new}, to name a few. One can refer to the comprehensive review of NMF in~\cite{wang2013nonnegative}.

\myparatight{Approximate sparsity via $l_1$-norm} 
Sparsity in classic NMF occurs is as a by-product rather than a designed objective. To this end, researchers have paid much attention to impose sparsity on NMF explicitly. These methods either penalize~\cite{eggert2004sparse,kim2007sparse} or constrain~\cite{hoyer2004non} $l_1$-norm of the representation matrix to yield a sparse representation~\cite{donoho2003optimally}. As a convex relaxation of $l_0$-norm, the exact sparsity measure which renders NP-hard combinational optimization problem, $l_1$-norm is used to improve computational feasibility and efficiency of sparse representation~\cite{bruckstein2009sparse}. However, as NMF is NP-hard itself~\cite{vavasis2009complexity}, we can say that any algorithm for NMF is suboptimal. Thus, a heuristic $l_0$-norm constrained NMF might be as appropriate and efficient as $l_1$-norm sparse NMF. 

\myparatight{$L_0$-norm based sparsity} 
Little work has been done using direct $l_0$-norm, the exact sparsity measure which causes the NP-hard combinational optimization problem. K-SVD~\cite{aharon2006img} aims to find an overcomplete dictionary for sparse representation, and the sparse approximation is based on an $l_0$-norm inequality constraint and is solved via greedy orthogonal matching pursuit (OMP)~\cite{tropp2007signal}. 
Peharz and Pernkopf~\cite{peharz2012sparse} further proposed sparse NMF that introduced an $l_0$-norm to constrain the basis or representation matrix of approximate NMF. It is solved via sparse alternating nonnegative least-squares (sANLS).
Wang et al.~\cite{wang2014hierarchical} leveraged hierarchical Bayes to model an adaptive sparseness prior, which has a similar properity with an $l_0$-norm.
However, these methods are under the hidden assumption that data are generated from single subspace. In addition, they cannot handle various errors. In this work, we focus on analyzing data (approximately) drawn from multiple subspaces. Different from using greedy methods, the $l_0$-norm constraint in MFC$_0$ is solved analytically via our designed proximal operator, which is easy to be incorporated into our developed alternating direction algorithm. 

\section{Notations and Problem Definition}

\subsection{Main Notations}
In this paper, vectors and matrices are written as bold lowercase and uppercase symbols. The $i$-th entry of vector $\mathbf{u}$ is $u_i$. For matrix $\mathbf{W}$, its $(i,j)$-th entry, $i$-th column, and $j$-th row are denoted as $w_{i,j}$, $\mathbf{w}_i$, and $\mathbf{w}^j$ respectively. 
The horizontal and vertical concatenation of a set of $K$ matrices $\mathbf{W}_1, \mathbf{W}_2, \cdots, \mathbf{W}_K$ along row and column are denoted by $\left[\mathbf{W}_1, \mathbf{W}_2, \cdots, \mathbf{W}_K \right]$ and $\left[\mathbf{W}_1; \mathbf{W}_2; \cdots; \mathbf{W}_K \right]$, respectively.
The superscript $T$ stands for the transpose. 

For vector $\mathbf{u} \in \Re^m$, its $l_p$-norm is $\left\| \mathbf{u} \right\|_p = {(\sum_{i=1}^m |u_i|^p)}^\frac{1}{p}$, and pseudo $l_0$-norm is the number of nonzero entries, i.e., $\left\| \mathbf{u} \right\|_0 = \#\{i:u_i \neq 0\}$. For matrix $\mathbf{W} \in \Re^{m \times n}$, its Frobenius norm is $\left\| \mathbf{W} \right\|_F =
\sqrt{\sum_{j=1}^m \left\| \mathbf{w}^j\right\|_2^2}$; $l_{2,1}$-norm is $\left\| \mathbf{W} \right\|_{2,1}=\sum_{i=1}^n \left\| \mathbf{w}_i \right\|_2$; $l_1$-norm is $\left\| \mathbf{W} \right\|_1=\sum_{j=1}^m \sum_{i=1}^n |w_{ij}| $; and $l_\infty$-norm is $\left\| \mathbf{W} \right\|_\infty = \max_{i,j}|w_{i,j}|$. The inner product between two matrices $\mathbf{W}$, $\mathbf{U}$ is $\langle \mathbf{W}, \mathbf{U} \rangle = \textbf{tr}(\mathbf{W}^T \mathbf{U})$, where $\textbf{tr}$ is the trace operator defined on a square matrix.

We denote $k$-th subspace as $\mathcal{S}_k$ and a collection of $K$ subspaces as $\{\mathcal{S}_k\}_{k=1}^K$.

\subsection{Problem Definition}

We present two definitions related to our problem at first. 
\begin{itemize}[noitemsep,nolistsep]
\item \myparatight{Union and sum of subspaces} The union of $K$ subspaces $\{\mathcal{S}_k\}_{k=1}^K$ is defined as $\cup_{k=1}^K \mathcal{S}_k = \{ \mathbf{t}: \mathbf{t} \in \mathcal{S}_k, \forall k \}$. The sum of $K$ subspaces $\{\mathcal{S}_k\}_{k=1}^K$ is defined as $\sum_{k=1}^K \mathcal{S}_k = \{ \mathbf{t}: \mathbf{t} = \sum_{k=1}^K \mathbf{t}_k, \mathbf{t}_k \in \mathcal{S}_k \}$.    

\item \myparatight{Independent subspaces} $K$ subspaces $\{\mathcal{S}_k\}_{k=1}^K$ are independent if and only if $\mathbf{S}_k \cap \sum_{j \neq k} \mathcal{S}_j = \{ 0 \}$. Or to say, $\text{dim}(\sum_{k=1}^K \mathcal{S}_k) = \sum_{k=1}^K \text{dim} (\mathcal{S}_k) $. 
\end{itemize}

With above definitions, our problem is defined as
\begin{problem}[Robust Multi-subspace Analysis]
Given a collection of data samples strictly/approximately drawn from a union of independent subspaces with an equal subspace dimension, robust multi-subspace analysis is to learn the basis for each subspace, learn the associated representation for each sample upon the basis, or/and correct the error for each data sample.
\end{problem}

Formally, given $n$ data samples $ \mathbf{Z} = \left\{\mathbf{z}_i\right\}_{i=1}^n \in \Re^{m \times n}$ that are generated from a basis matrix $\tilde{\mathbf{X}} = \left\{\tilde{\mathbf{x}}_i\right\}_{i=1}^d \in \Re^{m \times d}$ that span $K$ independent subspaces $\{\mathcal{S}_k\}_{k=1}^K$, with a corresponding representation matrix $\tilde{\mathbf{Y}} = \left\{\tilde{\mathbf{y}}_i\right\}_{i=1}^n \in \Re^{d \times n}$, or/and contaminated with errors $\tilde{\mathbf{E}} \in \Re^{m \times n} $ 
\begin{equation}
\label{a01}
\mathbf{Z} = \tilde{\mathbf{X}} \tilde{\mathbf{Y}} + \tilde{\mathbf{E}}.
\end{equation}

Then, our purpose is from the observed data $\mathbf{Z}$ to learn the basis $\tilde{\mathbf{X}}_k$ for each subspace $\mathcal{S}_k$, the representation $\tilde{\mathbf{Y}}_k$ upon the basis $\tilde{\mathbf{X}}_k$, or/and the error $\tilde{\mathbf{E}}_k$ such that
\begin{small}
\begin{equation}
\label{a10}
\left[ \mathbf{Z}_1, \cdots, \mathbf{Z}_K \right] = \left[ \tilde{\mathbf{X}}_1, \cdots, \tilde{\mathbf{X}}_K \right] 
\left[
     \begin{array}{ccc}
    \tilde{\mathbf{Y}}_1 \\
    \vdots   \\
    \tilde{\mathbf{Y}}_K \\
  \end{array}
\right] 
+ \left[ \tilde{\mathbf{E}}_1, \cdots, \tilde{\mathbf{E}}_K \right],
\end{equation}
\end{small}
where $\small \mathbf{Z}= \left[ \mathbf{Z}_1, \cdots, \mathbf{Z}_K \right]$ with $\small \mathbf{Z}_k \in \Re^{m \times n_k}$ merely collecting the samples from the $k$-th subspace $\mathcal{S}_k$ and $\sum_{k=1}^K n_k = n$; $\small \tilde{\mathbf{E}} = \left[ \tilde{\mathbf{E}}_1, \cdots, \tilde{\mathbf{E}}_K \right]$ with $\small \tilde{\mathbf{E}}_k \in \Re^{m \times n_k}$ containing the errors of samples in $\small \mathbf{Z}_k$; $\small \tilde{\mathbf{X}} = \left[ \tilde{\mathbf{X}}_1, \cdots, \tilde{\mathbf{X}}_K \right] \in \Re^{m \times K d_0}$ with $\small \tilde{\mathbf{X}}_k \in \Re^{m \times d_0}$ the basis to span $\mathcal{S}_k$; $\small \tilde{\mathbf{Y}} = \left[ \tilde{\mathbf{Y}}_1; \cdots; \tilde{\mathbf{Y}}_K \right]$ with $\small \tilde{\mathbf{Y}}_k \in \Re^{d_0 \times n}$ the representation upon the basis $\small \tilde{\mathbf{X}}_k$; $d_0$ indicates the subspace dimension, $d=K d_0$.

In the sequel, we first propose a model to deal with data that are clean, i.e., $\tilde{\mathbf{E}}=\mathbf{0}$. Then we generalize our model to handle data that are contaminated with random corruptions or sample-specific outliers, i.e., $\tilde{\mathbf{E}} \neq \mathbf{0}$.




\section{The Proposed Method}

In this section, we first introduce a theorem that inspires us to formulate the problem with clean data. Then, we generalize the model to handle contaminated data. Finally, we present the algorithm to solve our model.  



\subsection{Problem Formulation}


To start with, we assume that $n$ data samples $\mathbf{Z}$ are strictly generated from $K$ independent subspaces. 
In doing so, we can come to the conclusion as follows:
\begin{theorem}
\label{theorem_1}
Suppose $K$ subspaces $\left\{\mathcal{S}_k\right\}_{k=1}^K$ are independent and $\tilde{\mathbf{X}}_k$ consists of basis that only span subspace $\mathcal{S}_k$, then the solution of representation matrix $\tilde{\mathbf{Y}}^{\star}$ to Eq.($\ref{a10}$) dealing with clean data is block-diagonal:
\begin{small}
\begin{equation}
\label{a11}
\tilde{\mathbf{Y}}^{\star} =
\left[
     \begin{array}{ccc}
    \mathbf{\hat{Y}}_1 & \cdots & \mathbf{0} \\
    \vdots   & \ddots & \vdots \\
    \mathbf{0} &  \cdots & \mathbf{\hat{Y}}_K \\
  \end{array}
\right], 
\end{equation}
\end{small}
where $\mathbf{\hat{Y}}_k \in \Re^{d_0 \times n_k}$ and $\tilde{\mathbf{Y}}^k = [\mathbf{0}, \cdots, \mathbf{\hat{Y}}_k, \mathbf{0}, \cdots]$.
\end{theorem}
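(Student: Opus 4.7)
The plan is to exploit the independence condition $\mathcal{S}_k \cap \sum_{l\neq k}\mathcal{S}_l = \{\mathbf{0}\}$ column-by-column on $\tilde{\mathbf{Y}}^{\star}$. Pick any column $\mathbf{z}_j$ of $\mathbf{Z}_k$, so by construction $\mathbf{z}_j \in \mathcal{S}_k$. Partition the corresponding representation column $\tilde{\mathbf{y}}_j^{\star} = [\mathbf{y}_j^{(1)}; \cdots; \mathbf{y}_j^{(K)}]$ conformally with the block structure of $\tilde{\mathbf{X}}$, so that $\mathbf{z}_j = \sum_{l=1}^{K} \tilde{\mathbf{X}}_l \mathbf{y}_j^{(l)}$. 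I will then show $\mathbf{y}_j^{(l)} = \mathbf{0}$ for every $l \neq k$; doing this for each $j$ in each block $\mathbf{Z}_k$ immediately yields the block-diagonal structure claimed in Eq.~(\ref{a11}).

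The central step is an independence argument. Rewrite the above as
\[
\mathbf{z}_j - \tilde{\mathbf{X}}_k \mathbf{y}_j^{(k)} \;=\; \sum_{l \neq k} \tilde{\mathbf{X}}_l \mathbf{y}_j^{(l)}.
\]
The left-hand side lies in $\mathcal{S}_k$ since $\mathbf{z}_j \in \mathcal{S}_k$ and $\tilde{\mathbf{X}}_k\mathbf{y}_j^{(k)} \in \mathcal{S}_k$, while the right-hand side lies in $\sum_{l\neq k}\mathcal{S}_l$. By the definition of independent subspaces recalled in the excerpt, the intersection of these two subspaces is $\{\mathbf{0}\}$, so both sides must vanish. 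In particular $\sum_{l\neq k}\tilde{\mathbf{X}}_l\mathbf{y}_j^{(l)} = \mathbf{0}$.

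Next I would promote this to $\mathbf{y}_j^{(l)} = \mathbf{0}$ individually. Because $\{\mathcal{S}_l\}$ are independent, $\dim(\sum_l \mathcal{S}_l) = \sum_l \dim(\mathcal{S}_l)$, and therefore the concatenated columns of $[\tilde{\mathbf{X}}_1,\cdots,\tilde{\mathbf{X}}_K]$ are linearly independent (they form a basis of $\sum_l \mathcal{S}_l$). The identity $\sum_{l\neq k}\tilde{\mathbf{X}}_l\mathbf{y}_j^{(l)} = \mathbf{0}$ then forces $\mathbf{y}_j^{(l)}=\mathbf{0}$ for all $l\neq k$, as required. The same linear independence also guarantees that $\mathbf{y}_j^{(k)}$ is the unique coefficient vector expressing $\mathbf{z}_j$ on $\tilde{\mathbf{X}}_k$, so the solution $\tilde{\mathbf{Y}}^{\star}$ is in fact uniquely determined and has exactly the block-diagonal form of Eq.~(\ref{a11}), with $\hat{\mathbf{Y}}_k$ collecting the $\mathbf{y}_j^{(k)}$ for $\mathbf{z}_j \in \mathbf{Z}_k$.

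I do not anticipate any serious obstacle; the only subtle point is being careful that ``independence of subspaces'' in the sense of the paper's definition actually implies joint linear independence of the concatenated basis, which is what permits passing from the vector identity to the block-wise identity. Everything else is a direct application of the independence axiom applied columnwise, and nothing beyond elementary linear algebra is needed.
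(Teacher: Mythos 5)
Your proposal is correct and follows essentially the same route as the paper: the paper splits $\tilde{\mathbf{Y}}^{\star}$ into a block-diagonal part and an off-diagonal part and applies the independence condition $\mathcal{S}_k \cap \sum_{j\neq k}\mathcal{S}_j = \{\mathbf{0}\}$ column by column to kill the off-diagonal contribution, which is exactly your conformal partition $\tilde{\mathbf{y}}_j^{\star} = [\mathbf{y}_j^{(1)};\cdots;\mathbf{y}_j^{(K)}]$ in matrix form. The only difference is that you make explicit the final step from $\sum_{l\neq k}\tilde{\mathbf{X}}_l\mathbf{y}_j^{(l)}=\mathbf{0}$ to $\mathbf{y}_j^{(l)}=\mathbf{0}$ via linear independence of the concatenated basis, a point the paper passes over when it jumps from $\mathbf{X}\mathbf{Y}^{N}=\mathbf{0}$ to $\tilde{\mathbf{Y}}^{\star}=\mathbf{Y}^{D}$.
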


\begin{proof}
See Appendix~\ref{app:theorem1}.
\end{proof}

Theorem~\ref{theorem_1} shows the condition under which we can achieve the \emph{block-diagonal} representation matrix, which is of vital importance to support analyzing the structure of multi-subspace. That is, if we know the basis that clean data lie in, then each block of the representation matrix  characterizes each subspace. 
In practice, however, the basis of each subspace are often unknown, Thus, the generated representation matrix cannot be guaranteed to be block-diagonal. Fortunately, Theorem~\ref{theorem_1} offers us the hint to construct a satisfactory model. 
In short, we expect to \emph{simultaneously learn the basis matrix and learn the representation matrix inspired by Theorem~\ref{theorem_1}}. 

To be more specific, we first impose an orthonormal constraint on the basis matrix to ensure that its columns can indeed form the ``basis" to span independent subspaces. 
In order to generate the block-diagonal representation, we characterize each data sample as a nonnegative combination\footnote{We only deal with data with nonnegative entries.} of only the basis that span its underlying subspace. To this end, we leverage a \emph{column $l_0$-norm} constrained on the representation matrix and set its number of nonzero elements equal to the subspace dimension. 

Formally, the objective function satisfying above constraints is defined as
\begin{small}
\begin{align}
& \min \limits_{\mathbf{X} ,\mathbf{Y}} {\left \| \mathbf{Z-XY} \right \|}_F^2 \label{a12-1} \\ 
& \textrm{s.t.} \quad \mathbf{X^TX}=\mathbf{I}_d,  \label{a12-2} \\ 
& \qquad \mathbf{Y} \geq 0, \label{a12-3} \\ 
& \qquad \| \mathbf{y}_i \|_0 = d_0, \forall i, \label{a12-4} 
\end{align}
\end{small}
where Eq.(\ref{a12-2}) is the orthonormal constraint on $\mathbf{X}$; $\mathbf{I}_d$ is a $d \times d$ identity matrix. Eq.(\ref{a12-3}) indicates a nonnegative combination of the orthonormal basis; ``$\geq$" is taken component-wise. Eq.(\ref{a12-4}) is the requirement of the number of nonzero coefficients. For writing simplicity, we omit the ``$ \forall i $'' symbol in the subsequent related equations.


In practice, the observed data are often contaminated with errors, e.g., noises, random corruptions or sample-specific outliers, rendering the data away from their exact underlying subspaces. To handle different types of errors, we adopt different regularizations. Specifically, we generalize the objective function in Eq.(\ref{a12-1}) to
\begin{small}
\begin{equation}
\label{a13}
\begin{split}
& \min \limits_{\mathbf{X,Y,E}} {\left\| \mathbf{Z-XY-E} \right\|}_F^2 + \lambda \left\| \mathbf{E} \right\|_{\Delta} \\
& \textrm{s.t.} \quad \mathbf{X^TX}=\mathbf{I}_d, \, \mathbf{Y} \geq 0, \, \| \mathbf{y}_i \|_0 = d_0,
\end{split}
\end{equation}
\end{small}
where $\left\| \mathbf{E} \right\|_{\Delta}$ is some type of matrix norm. For random corruptions, it is $\left\| \mathbf{E} \right\|_1$; While for sample-specific outliers, it is $\left\| \mathbf{E} \right\|_{2,1}$. $\lambda>0$ is the regularization parameter that controls the effect of two terms.

By implementing above two objective functions, we claim that when the data are clean, then $\mathbf{Y}$ is block-diagonal after reordering its columns and rows, i.e., $\mathbf{Y} = \tilde{\mathbf{Y}}^{\star}$. When the data are contaminated with random corruptions or sample-specific outliers to some extent, our experimental results show that $\mathbf{Y}$ is still approximate block-diagonal.

It is necessary to notice that SSC and LRR can also be boiled down to the matrix factorization framework. They share some similarities with the proposed method, but there exist two major differences: On one hand, they take advantage of what they refer to as ``self-expressiveness" property and use the data themselves to form the dictionary (basis). However, leveraging the data as the basis fail to perform data reconstruction when they contain errors to some degree (See Figure $\ref{fig1}$). On the other hand, they respectively utilize an indirect $l_1$-norm and nuclear norm to obtain the sparse and low-rank representation of the data, which however, are unable to know the exact basis that are used to represent each data sample. Thus, their ability for subspace clustering is also limited (See Figure $\ref{fig5}$).

\begin{figure}[!hbt]
\begin{center}
\subfigure[Random Corruptions]{
\includegraphics[width=0.24\linewidth]{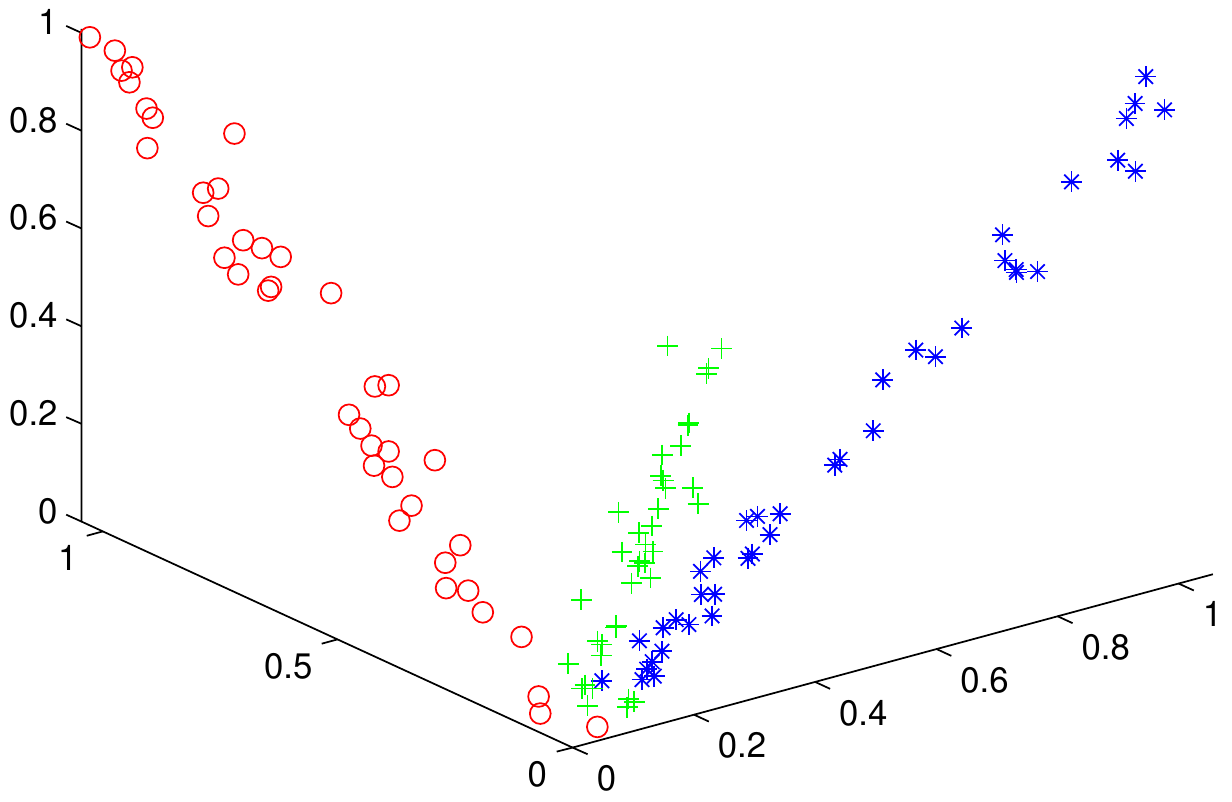}
\includegraphics[width=0.24\linewidth]{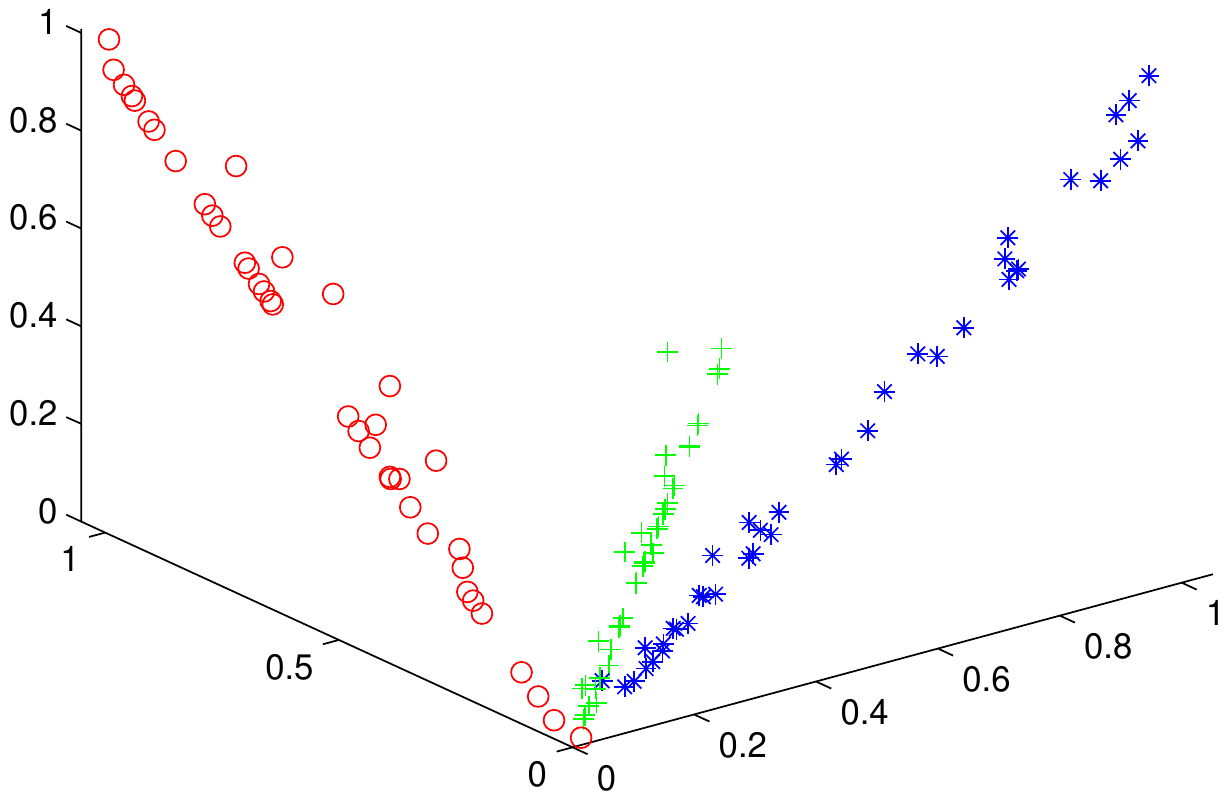}
\includegraphics[width=0.24\linewidth]{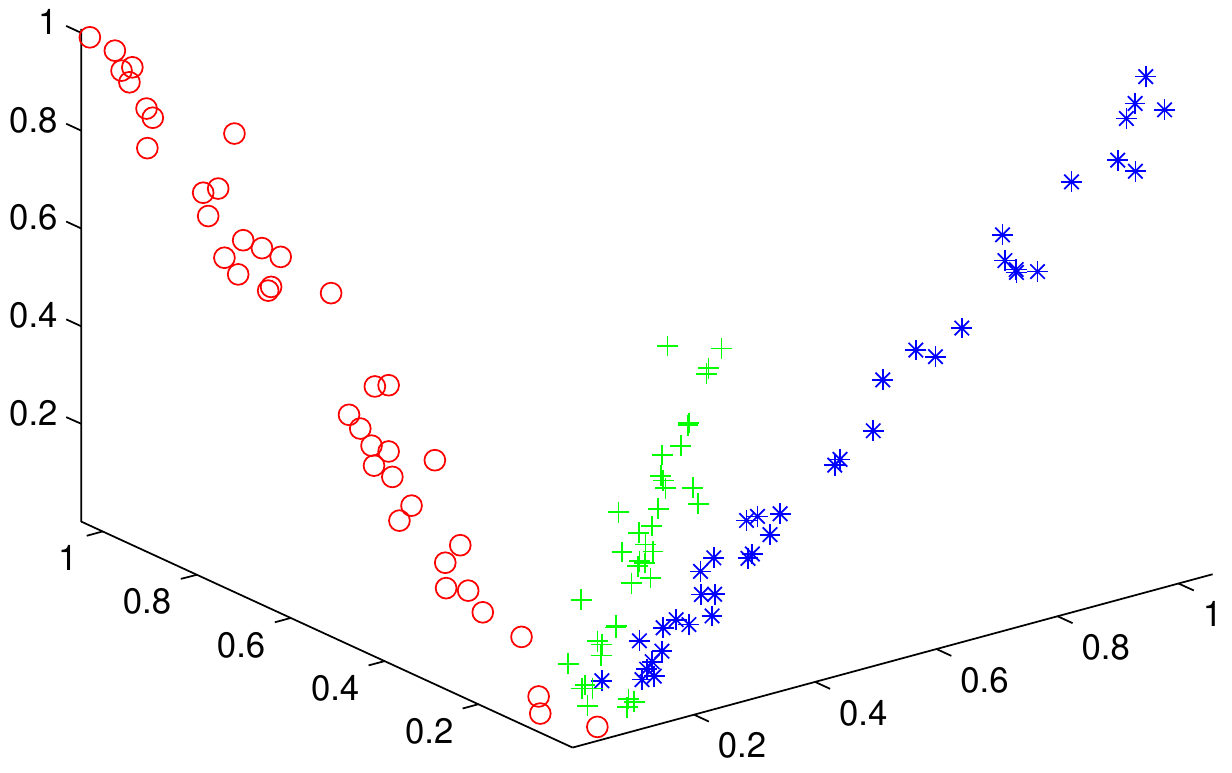}
\includegraphics[width=0.24\linewidth]{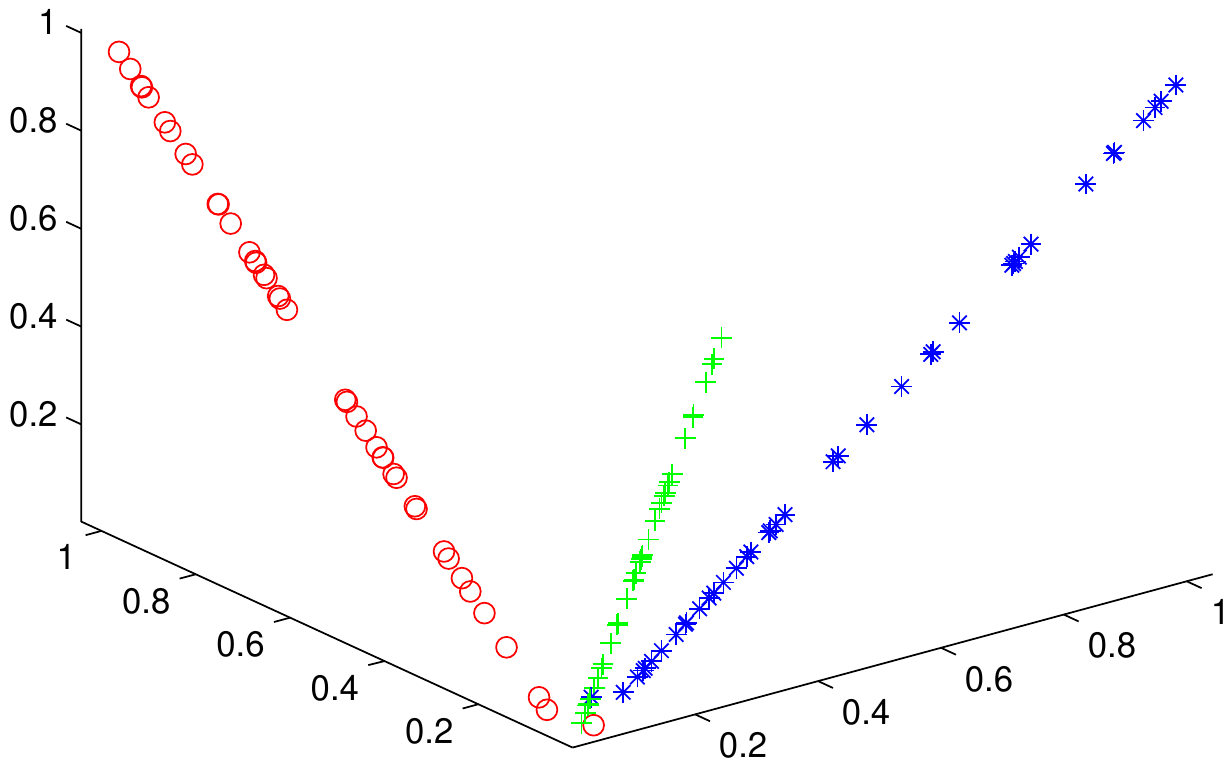}
}
\subfigure[Sample-Specific Outliers]{
\includegraphics[width=0.24\linewidth]{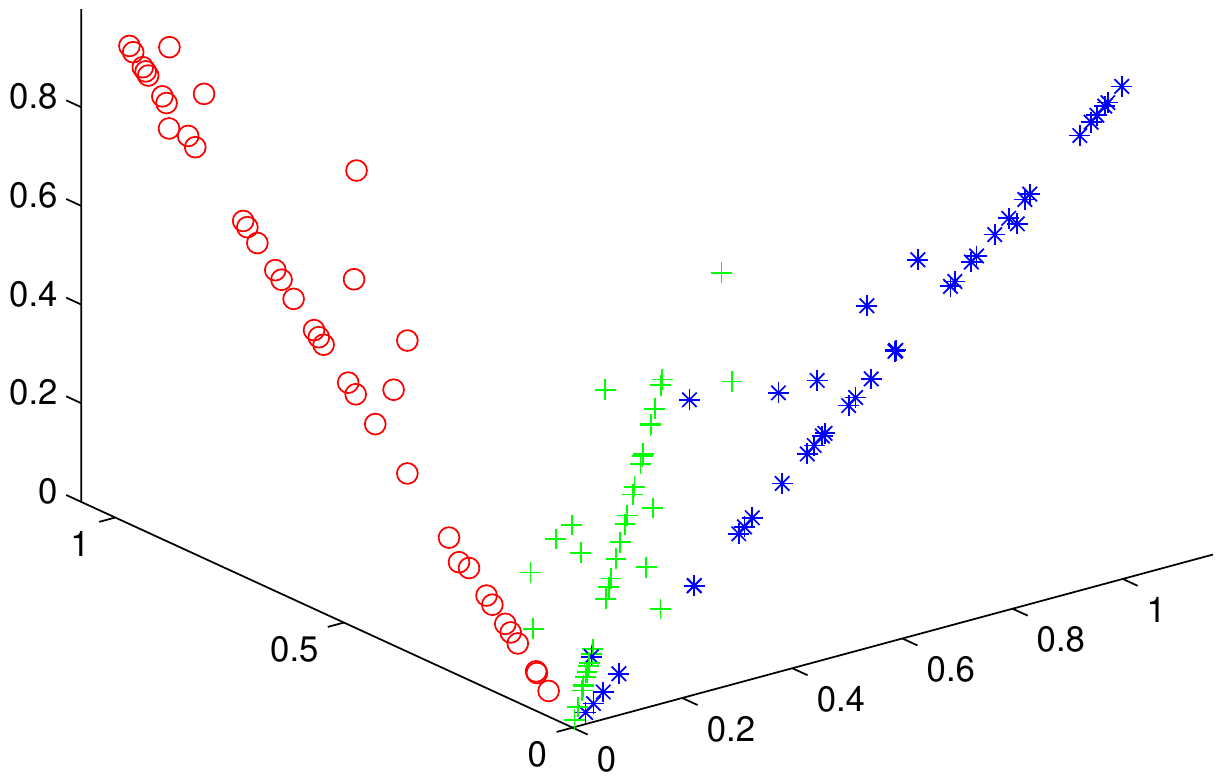}
\includegraphics[width=0.24\linewidth]{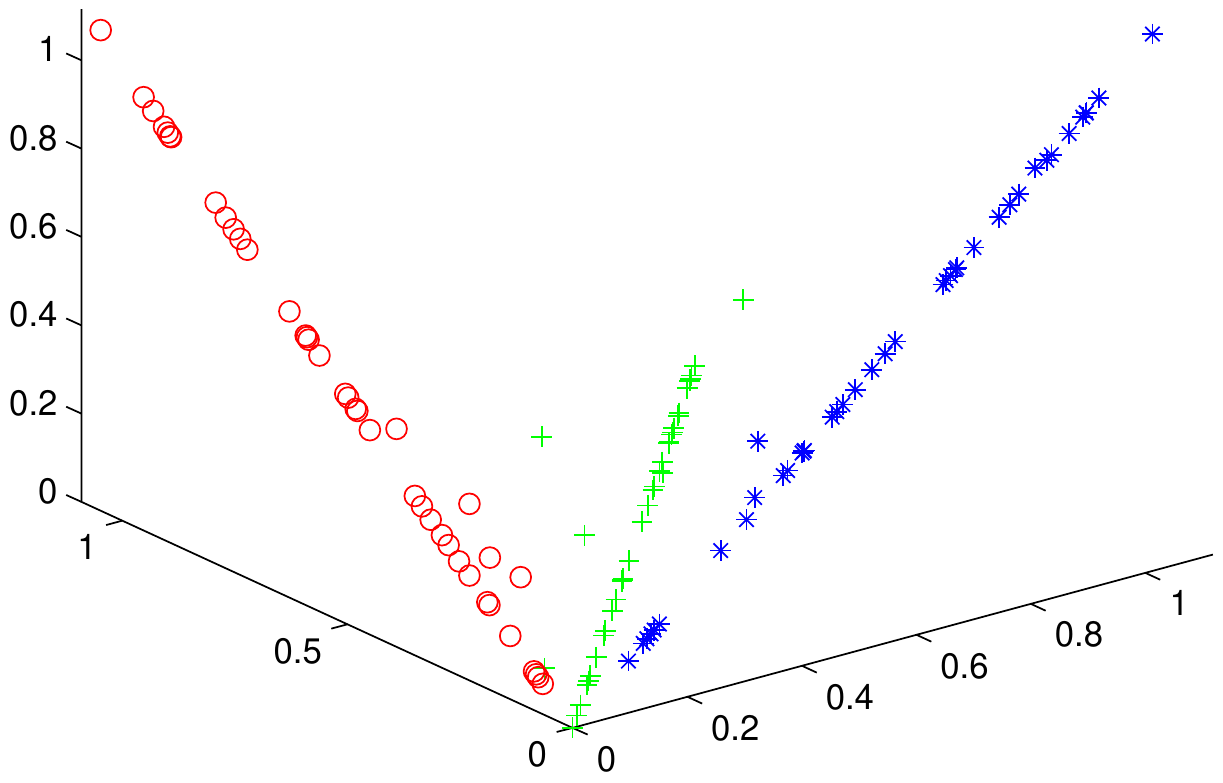}
\includegraphics[width=0.24\linewidth]{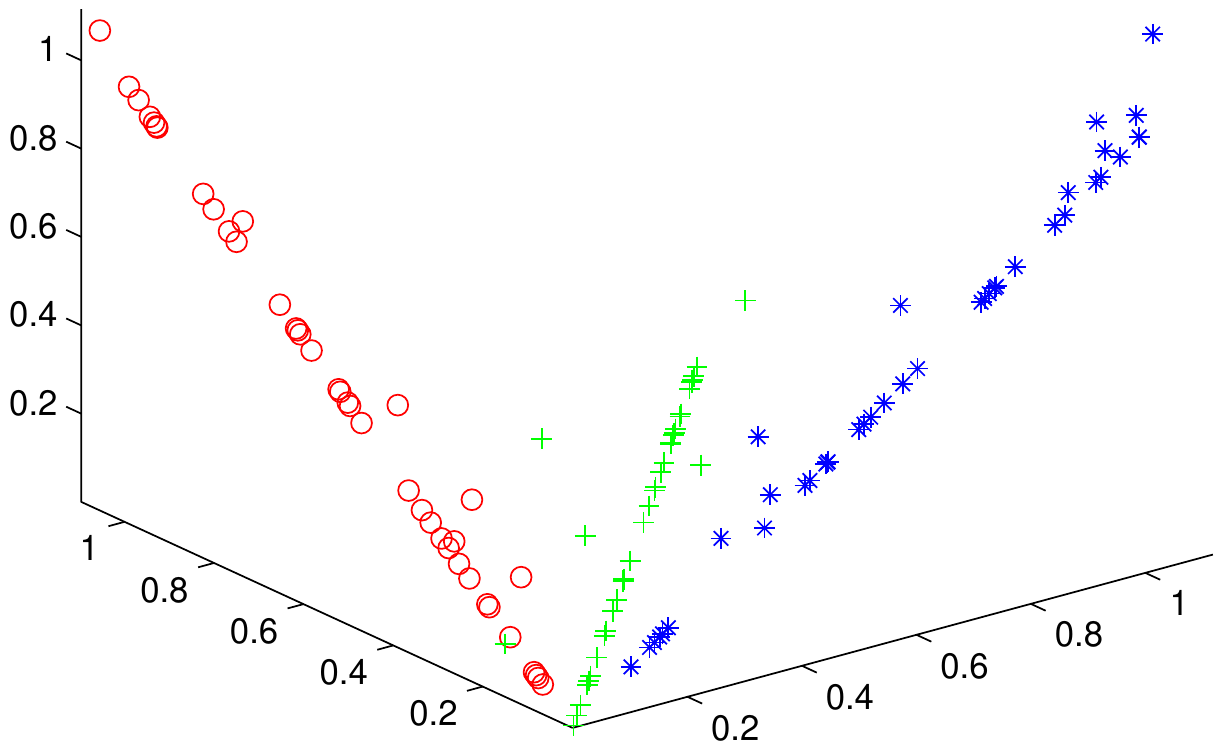}
\includegraphics[width=0.24\linewidth]{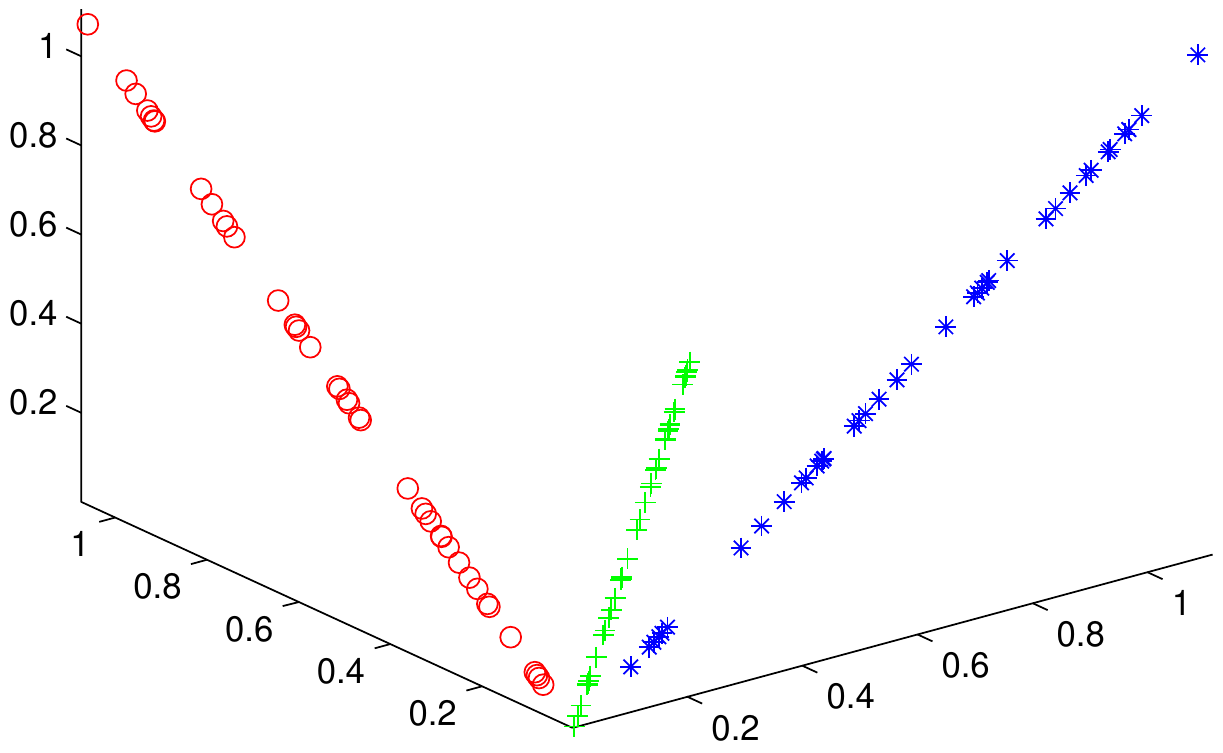}
}
\end{center}
\caption{Reconstruction results of data generated from 3 independent subspaces and contaminated with (a) random corruptions
and (b) sample-specific outliers. The first column shows the original contaminated data. The remaining three columns are the results of SSC, LRR, and MFC$_0$ respectively.}
\label{fig1}
\end{figure}

\subsection{Problem Optimization}

In this part, without loss of generality, we just solve for the general case in Eq.(\ref{a13}). Notice that we need to handle the \emph{nonconvex} multivariable objective function, as well as the \emph{nonsmooth} $l_0$-norm constraint. Fortunately, for function with multivariables, the ADMM method is often the choice~\cite{boyd2011distributed}. Moreover, proximal algorithms has recently been a popular tool for solving nonsmooth and nonconvex problems. It's basic operation is evaluating the proximal operator of a univariable function, which often admits a closed-from solution or can be solved very quickly~\cite{parikh2013proximal}.  Inspired by such superiority, we develop a first-order alternating direction algorithm to solve Eq.(\ref{a13}). The overall procedure of the algorithm is first introducing auxiliary variables and quadratic penalties into the objective function, then iteratively minimizing the augmented Lagrangian function with respect to each primal variable, and finally updating the multipliers. To tackle the subproblem on nonsmooth $l_0$-norm constraint, we design a novel proximal operator to solve it efficiently and analytically.

We begin with introducing an auxiliary variable $\mathbf{V}$ to variable $\mathbf{Y}$ which exists in both the object function and the constraint. We reformulate Eq.(\ref{a13}) as
\begin{equation}
\label{a21}
\begin{split}
& \min \limits_{\mathbf{X,Y,E,V}} \left\| \mathbf{Z-XY-E} \right\|_F^2 + \lambda \left\| \mathbf{E} \right\|_{\Delta}, \\
& \qquad \textrm{s.t.} \quad \mathbf{X^TX}=\mathbf{I}_d, \mathbf{Y} = \mathbf{V}, \mathbf{V} \geq 0, \| \mathbf{v}_i \|_0 = d_0.
\end{split}
\end{equation}

Then, the augmented Lagrangian function of Eq.(\ref{a21}) is
\begin{small}
\begin{equation}
\label{a22}
\begin{split}
& \mathcal{L_A}(\mathbf{X,Y,V,E,P}) = \left\| \mathbf{Z-XY-E} \right\|_F^2 + \lambda \left\| \mathbf{E} \right\|_{\Delta} \\
& \qquad \quad + \langle \mathbf{P}, \mathbf{Y-V} \rangle + \frac{\beta}{2} \left \| \mathbf{Y}-\mathbf{V} \right \|_F^2, \\
& \quad \textrm{s.t.} \quad  \mathbf{X^TX}=\mathbf{I}_d, \mathbf{V} \geq 0, \, \| \mathbf{v}_i \|_0 = d_0,
\end{split}
\end{equation}
\end{small}
where $\mathbf{P}$ is the Lagrangian multiplier, $\beta >0$ is the quadratic penalty parameter. Note that adding the penalty term does not change the optimal solution, since any feasible solution satisfying the constraint in Eq.($\ref{a22}$) vanishes the penalty term.

Finally, our alternating direction algorithm consists of iteratively minimizing Eq.(\ref{a22}) with respect to one of $\mathbf{X,Y,V,E}$ while fixing the others, and updating the multiplier $\mathbf{P}$. Specifically, we solve for variables and the multiplier step by step. 

$\textbf{Update X}$: By discarding terms that are irrelevant to $\mathbf{X}$, we rewrite the subproblem with respect to $\mathbf{X}$ as
\begin{equation}
    \label{a23}
    \min_\mathbf{X} \left\| \mathbf{Z}-\mathbf{XY} -\mathbf{E} \right\|_F^2, \quad \textrm{s.t.} \quad \mathbf{X^TX}=\mathbf{I}_d.
\end{equation}
We denote the singular value decomposition (SVD) of $\mathbf{(Z-E)Y^T}=\mathbf{L \Sigma R^T}$, then from $\textbf{Theorem 2}$ below, we can achieve the closed-form solution of $\mathbf{X}$. That is,
\begin{equation}
    \mathbf{X} = \mathbf{LR^T}.
\end{equation}

\begin{theorem}
\label{theorem_2}
Let $\mathbf{A} \in \Re^{m \times n}$ and $\mathbf{B} \in \Re^{d \times n}$ be any two matrices. Denote the SVD of $\mathbf{AB^T}$ as $\mathbf{AB^T}= \mathbf{L \Sigma R^T}$. Then the orthonormal constrained minimization problem
\begin{equation*}
    \min_{\mathbf{D}} \left\| \mathbf{A - DB}\right\|_F^2   \quad \textrm{s.t.}  \quad \mathbf{D^TD}=\mathbf{I}_d, 
\end{equation*}
has an analytic solution $\mathbf{D}=\mathbf{LR^T}$.
\end{theorem}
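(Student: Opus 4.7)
This is the classical orthogonal Procrustes problem, and the plan is to reduce the constrained Frobenius-norm minimization to a trace maximization that admits a direct SVD-based bound. First, I would expand
\begin{equation*}
\| \mathbf{A} - \mathbf{DB} \|_F^2 = \textbf{tr}(\mathbf{A}^T\mathbf{A}) - 2\,\textbf{tr}(\mathbf{A}^T\mathbf{DB}) + \textbf{tr}(\mathbf{B}^T\mathbf{D}^T\mathbf{DB}).
\end{equation*}
Under the constraint $\mathbf{D}^T\mathbf{D}=\mathbf{I}_d$, the third term collapses to the constant $\textbf{tr}(\mathbf{B}^T\mathbf{B})$ and the first is trivially constant, so the problem reduces, up to additive constants, to maximizing $\textbf{tr}(\mathbf{A}^T\mathbf{DB}) = \textbf{tr}(\mathbf{BA}^T\mathbf{D})$ by the cyclic property of the trace.

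Next I would substitute $\mathbf{AB}^T = \mathbf{L}\boldsymbol{\Sigma}\mathbf{R}^T$ (so $\mathbf{BA}^T=\mathbf{R}\boldsymbol{\Sigma}\mathbf{L}^T$) and set $\mathbf{Q}:=\mathbf{L}^T\mathbf{D}\mathbf{R}$, which gives $\textbf{tr}(\mathbf{BA}^T\mathbf{D}) = \textbf{tr}(\boldsymbol{\Sigma}\mathbf{Q}) = \sum_i \sigma_i q_{ii}$. The heart of the argument, and the step that needs the most care, is the bound $|q_{ii}|\le 1$: writing $q_{ii} = \mathbf{l}_i^T(\mathbf{D}\mathbf{r}_i)$, the columns $\mathbf{l}_i,\mathbf{r}_i$ are unit vectors, and $\|\mathbf{D}\mathbf{r}_i\|_2^2 = \mathbf{r}_i^T\mathbf{D}^T\mathbf{D}\mathbf{r}_i = 1$ by feasibility of $\mathbf{D}$, so Cauchy--Schwarz yields $|q_{ii}|\le 1$. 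Since $\sigma_i \ge 0$, this delivers the upper bound $\textbf{tr}(\boldsymbol{\Sigma}\mathbf{Q}) \le \sum_i \sigma_i$.

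Finally I would verify that $\mathbf{D}=\mathbf{L}\mathbf{R}^T$ is feasible and attains this bound: $\mathbf{D}^T\mathbf{D} = \mathbf{R}\mathbf{L}^T\mathbf{L}\mathbf{R}^T = \mathbf{R}\mathbf{R}^T = \mathbf{I}_d$ using $\mathbf{L}^T\mathbf{L}=\mathbf{I}_d$ and the orthogonality of the square factor $\mathbf{R}$, and substituting this $\mathbf{D}$ yields $\mathbf{Q}=\mathbf{I}_d$, hence $\textbf{tr}(\boldsymbol{\Sigma}\mathbf{Q})=\sum_i\sigma_i$. The main thing to be careful about is the SVD convention: feasibility of the constraint forces $m\ge d$, and the economy SVD should be used so that $\mathbf{L}\in\Re^{m\times d}$ has orthonormal columns and $\mathbf{R}\in\Re^{d\times d}$ is genuinely orthogonal, making both $\mathbf{L}^T\mathbf{L}=\mathbf{I}_d$ and $\mathbf{R}\mathbf{R}^T=\mathbf{I}_d$ available in the computation above.
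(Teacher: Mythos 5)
Your proposal is correct and follows the same overall route as the paper: expand the Frobenius norm, use $\mathbf{D}^T\mathbf{D}=\mathbf{I}_d$ to reduce the problem to maximizing $\textbf{tr}(\mathbf{AB}^T\mathbf{D}^T)$, and substitute the SVD to turn this into maximizing $\textbf{tr}(\boldsymbol{\Sigma}\mathbf{Q})$ with $\mathbf{Q}=\mathbf{L}^T\mathbf{D}\mathbf{R}$. The one place you genuinely diverge is the key bounding step, and your version is the stronger one: you bound each diagonal entry by Cauchy--Schwarz, $|q_{ii}|=|\mathbf{l}_i^T(\mathbf{D}\mathbf{r}_i)|\le\|\mathbf{l}_i\|_2\|\mathbf{D}\mathbf{r}_i\|_2=1$, which immediately gives $\textbf{tr}(\boldsymbol{\Sigma}\mathbf{Q})\le\sum_i\sigma_i$ with equality at $\mathbf{Q}=\mathbf{I}_d$. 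The paper instead argues that $\|(\mathbf{DR})^T\mathbf{L}\|_F^2=\textbf{tr}(\mathbf{I}_d)$ is a constant and concludes the maximum occurs when $(\mathbf{DR})^T\mathbf{L}$ is diagonal with positive entries; that Frobenius-norm identity actually requires $\mathbf{L}$ to be square orthogonal (it computes $\textbf{tr}(\mathbf{L}^T\mathbf{D}\mathbf{D}^T\mathbf{L})$, and $\mathbf{D}\mathbf{D}^T$ is only a projector when $m>d$), so the paper's step is loose in exactly the rectangular case your economy-SVD remark addresses. In short, same decomposition and same reduction, but your Cauchy--Schwarz argument is airtight where the paper's is heuristic, and your attention to the shapes of $\mathbf{L}$ and $\mathbf{R}$ (forcing $m\ge d$ from feasibility) is a worthwhile addition.
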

\begin{proof}
See Appendix~\ref{app:theorem2}.
\end{proof}

$\textbf{Update Y}$: The subproblem of $\mathbf{Y}$ is
\begin{equation}
\label{g}
\min \limits_\mathbf{Y} \left\| \mathbf{Z-XY-E} \right\|_F^2 + \langle \mathbf{P, Y-V} \rangle + \frac{\beta}{2} \left\| \mathbf{Y-V} \right\|_F^2.
\end{equation}

Setting the derivative with respect to $\mathbf{Y}$ to zero, rearranging the terms, and using the constraint $\mathbf{X^TX}=\mathbf{I}_d$ yields

\begin{equation}
\label{h}
\mathbf{Y} = (1+\beta)^{-1}\big(\mathbf{X}^T \mathbf{(Z-E)} + \beta \mathbf{V} - \mathbf{P}\big).
\end{equation}

$\textbf{Update E}$: The subproblem of $\mathbf{E}$ becomes
\begin{equation}
\label{i}
\min \limits_\mathbf{E} \left\| \mathbf{E-(Z-XY)} \right\|_F^2 + \lambda \left\| \mathbf{E} \right\|_{\Delta}.
\end{equation}
For sample-specific outliers, we set $\left\| \mathbf{E} \right\|_{\Delta} = \left\| \mathbf{E} \right\|_{2,1}$. Then the solution to Eq.(\ref{i}) is equivalent to solving the proximal operator for $l_{2,1}$-norm. From~\cite{yang2009fast}, $\mathbf{E}$ can be obtained with a closed-form. Concretely, denote $\mathbf{G}=\mathbf{Z-XY}$, then
\begin{equation}
\label{j}
    \mathbf{e}_i =
        \begin{cases}
            (1-\frac{\lambda/2}{\left\| \mathbf{g}_i \right\|_2}) \mathbf{g}_i  & \text{$\left\| \mathbf{g}_i \right\|_2 \geq \frac{\lambda}{2}$}, \\
            0 & \text{$\left\| \mathbf{g}_i \right\|_2 < \frac{\lambda}{2}$}. \\
        \end{cases}
\end{equation}
Similarly, for random corruptions, we set $\left\| \mathbf{E} \right\|_{\Delta} = \left\| \mathbf{E} \right\|_{1}$. The solution then equals to finding the proximal operator for $l_1$-norm, and can be achieved via the Soft-Thresholding Operator~\cite{murphy2012machine}
\begin{equation}
    \mathbf{E} =
    \begin{cases}
           \mathbf{G} - \frac{\lambda}{2}  & \text{$\mathbf{G} \geq \frac{\lambda}{2}$}, \\
           \mathbf{0} & \text{$-\frac{\lambda}{2} \geq \mathbf{G} \leq \frac{\lambda}{2}$}, \\
           \mathbf{G} + \frac{\lambda}{2}  & \text{$\mathbf{G} \leq -\frac{\lambda}{2}$}. \\
    \end{cases}
\end{equation}

$\textbf{Update V}$: The subproblem associated with $\mathbf{V}$ is
\begin{equation}
\label{k}
\begin{split}
& \min_{\mathbf{V}} \left\| \mathbf{V}-(\mathbf{Y}+\beta^{-1} \mathbf{P}) \right\|_F^2, \\
& \quad s.t \quad  \mathbf{v}_i \geq 0, \left\| \mathbf{v}_i \right\|_0 = d_0.
\end{split}
\end{equation}

Here, we emphasize that due to the discrete $l_0$-norm constraint in Eq.(\ref{k}), $\mathbf{V}$ cannot be obtained via gradient or subgradient based methods. Benefiting from aforementioned proximal algorithms~\cite{parikh2013proximal}, we can design a simple yet very efficient algorithm to solve $\mathbf{V}$ column by column. The detail is as follows:

We denote $\mathbf{U} = (\mathbf{Y}+ \beta^{-1} \mathbf{P})$ and define the operator $\mathrm{P}_{d_0}:\Re^d \rightarrow \Re^d$ for $\mathbf{u}_i$ as
\begin{equation*}
    \mathrm{P}_{d_0}(\mathbf{u}_i) = \argmin_{\mathbf{v}_i} \left\{ || \mathbf{u}_i - \mathbf{v}_i ||_2^2 : ||\mathbf{v}_i||_0 = d_0 \right\}.
\end{equation*}

We also define the nonnegative orthant mapping as
\begin{equation*}
    \mathrm{P}_{+}(\mathbf{u}_i) = \argmin_{\mathbf{v}_i} \left\{ || \mathbf{u}_i - \mathbf{v}_i ||_2^2 : \mathbf{v}_i \geq 0 \right\} = \max\left\{0, \mathbf{u}_i\right\}.
\end{equation*}
and introduce an indicator function $I_{\mathcal{V}}(\mathbf{v})$ over the set 
\begin{equation*}
\mathcal{V} = \left\{\mathbf{v} \in \Re^d: \mathbf{v} \geq 0, ||\mathbf{v}||_0 = d_0\right\}. 
\end{equation*}


Then, we have the following theorem: 
\begin{theorem}
\label{theorem_3}
Given above defined operator $\mathrm{P}_{d_0}$, nonnegative orthant mapping $\mathrm{P}_{+}$, and indicator function $I_\mathcal{V}(\mathbf{v})$. Then the solution $\mathbf{v}_i$ of Eq.(\ref{k}) equals to solving the proximal operator\footnote{Note that there is a factor 1/2 on the quadratic term in the standard form. Here we remove it to simplify the notation.} of the indicator function 
\begin{equation*}
\begin{split}
\mathrm{prox}_{I_\mathcal{V}}(\mathbf{u}_i) & = \argmin_{\mathbf{v}_i} \left\{ || \mathbf{u}_i - \mathbf{v}_i ||_2^2: \mathbf{v}_i \geq 0, ||\mathbf{v}_i||_0 = d_0\right\} \\
 & = \argmin_{\mathbf{v}_i \in \mathcal{V}} \| \mathbf{u}_i - \mathbf{v}_i \|^2,
\end{split}
\end{equation*}
which has an analytical form, i.e., $\mathbf{v}_i = \mathrm{P}_{d_0}(\mathrm{P}_{+}(\mathbf{u}_i))$.
\end{theorem}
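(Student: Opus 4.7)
The plan is to reduce the vector-valued proximal problem to a coordinate-wise analysis by first fixing the support of $\mathbf{v}_i$. For any admissible support $S \subseteq \{1,\dots,d\}$ with $|S| = d_0$, the objective $\|\mathbf{u}_i - \mathbf{v}_i\|_2^2$ separates as $\sum_{j \in S}(u_{i,j} - v_{i,j})^2 + \sum_{j \notin S} u_{i,j}^2$, and the nonnegativity constraint decouples across coordinates. Within $S$, each one-dimensional subproblem $\min_{v_{i,j} \geq 0}(u_{i,j} - v_{i,j})^2$ is a scalar nonnegative least-squares problem whose minimizer is $v_{i,j}^\star = \max(0, u_{i,j})$, i.e., precisely the componentwise action of $\mathrm{P}_{+}$.

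Substituting these optimal values back yields an objective that depends only on $S$:
\begin{equation*}
\sum_{j \notin S} u_{i,j}^2 + \sum_{j \in S} \bigl(u_{i,j} - \max(0, u_{i,j})\bigr)^2 = \|\mathbf{u}_i\|_2^2 - \sum_{j \in S} \bigl(\max(0, u_{i,j})\bigr)^2.
\end{equation*}
Minimizing over $|S| = d_0$ is equivalent to maximizing $\sum_{j \in S} (\mathrm{P}_{+}(\mathbf{u}_i))_j^2$, which is achieved by selecting the $d_0$ indices of $\mathrm{P}_{+}(\mathbf{u}_i)$ with the largest magnitudes; by the very definition of $\mathrm{P}_{d_0}$, this coincides with the support of $\mathrm{P}_{d_0}(\mathrm{P}_{+}(\mathbf{u}_i))$. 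Combining the optimal support with the on-support values $v_{i,j}^\star = (\mathrm{P}_{+}(\mathbf{u}_i))_j$ and the off-support zeros recovers exactly $\mathbf{v}_i^\star = \mathrm{P}_{d_0}(\mathrm{P}_{+}(\mathbf{u}_i))$. Separability of the whole problem in Eq.(\ref{k}) across the columns of $\mathbf{U} = \mathbf{Y} + \beta^{-1}\mathbf{P}$ then yields the full closed form.

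The main obstacle I anticipate is the edge case in which $\mathbf{u}_i$ has fewer than $d_0$ strictly positive entries. Under the strict reading $\|\mathbf{v}_i\|_0 = d_0$, any chosen $j \in S$ with $u_{i,j} \leq 0$ forces $v_{i,j} > 0$, so the per-coordinate optimum $v_{i,j} = \max(0, u_{i,j}) = 0$ is only an infimum, not a minimum. I would resolve this either by (i) invoking the standard convention $\|\mathbf{v}_i\|_0 \leq d_0$, consistent with the paper's sparsity-level interpretation, under which $\mathrm{P}_{d_0}(\mathrm{P}_{+}(\mathbf{u}_i))$ attains the optimum directly; or (ii) observing that padding the support with indices carrying arbitrarily small positive values yields a sequence whose objective converges to $\|\mathbf{u}_i\|_2^2 - \sum_{j}(\mathrm{P}_{+}(\mathbf{u}_i))_j^2$, so $\mathrm{P}_{d_0}(\mathrm{P}_{+}(\mathbf{u}_i))$ represents the essentially unique optimizer (with ties among equal entries broken arbitrarily).
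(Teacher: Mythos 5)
Your proof is correct, and it reaches the result by a genuinely different route than the paper. The paper works with the whole vector at once: it splits the squared norm into contributions from the nonnegative and negative index sets of $\mathbf{u}_i$, establishes the identity $\|\mathbf{v}_i-\mathbf{u}_i\|_+^2+\|\mathbf{v}_i\|_-^2=\|\mathbf{v}_i-\mathrm{P}_{+}(\mathbf{u}_i)\|_2^2$, and then argues that the leftover cross term $-2\sum_{j\in\mathcal{I}_-}v_{i,j}u_{i,j}$ is nonnegative and vanishes only when $v_{i,j}=0$ on the negative indices, which collapses the problem to the sparse projection of $\mathrm{P}_{+}(\mathbf{u}_i)$. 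You instead use partial minimization: fix the support $S$, solve the decoupled scalar nonnegative least-squares problems to get $v_{i,j}^\star=\max(0,u_{i,j})$, and reduce the outer problem to the combinatorial selection $\max_{|S|=d_0}\sum_{j\in S}(\mathrm{P}_{+}(\mathbf{u}_i))_j^2$. Both arguments establish the same composition $\mathrm{P}_{d_0}\circ\mathrm{P}_{+}$; yours is more elementary and makes the greedy "keep the $d_0$ largest nonnegative entries" rule transparent, while the paper's norm-splitting identity is the kind of manipulation that generalizes to other separable constraint sets. A genuine added value of your write-up is the edge case where $\mathbf{u}_i$ has fewer than $d_0$ strictly positive entries: under the literal equality constraint $\|\mathbf{v}_i\|_0=d_0$ the optimum is then only an infimum, and the paper's final step $\argmin\{\|\mathbf{v}_i-\mathrm{P}_{+}(\mathbf{u}_i)\|_2^2:\|\mathbf{v}_i\|_0=d_0\}=\mathrm{P}_{d_0}(\mathrm{P}_{+}(\mathbf{u}_i))$ silently suffers from the same issue (the right-hand side need not even be feasible there). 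Your resolution via the relaxed constraint $\|\mathbf{v}_i\|_0\le d_0$, or via a limiting argument, patches a gap the paper does not acknowledge.
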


\begin{proof}
See Appendix~\ref{app:theorem3}. 
\end{proof} 

Based on \textbf{Theorem 3}, $\mathbf{v}_i$ can be acquired by selecting $d_0$ largest nonnegative entries of $\mathbf{u}_i$ with corresponding indices
$\mathbf{q}_j=\left[ q_{1,j}, \cdots, q_{d_0,j} \right]$. To be specific, for $i=1,\cdots,d$, we set
\begin{equation}
v_{i,j} =
\begin{cases}
u_{i,j} & \text{$i \in \mathbf{q}_j$}, \\
0 & \text{$i \notin \mathbf{q}_j$}.
\end{cases}
\end{equation}

$\textbf{Update P}$: Finally, for the multiplier $\mathbf{P}$, based on the dual optimal condition~\cite{boyd2011distributed}, its updating rule is
\begin{equation}
\mathbf{P} := \mathbf{P} + \mu (\mathbf{Y-V}),
\end{equation}
where $\mu = \min(\rho \mu, \mu_{max})$, with pregiven $\rho$ and $\mu_{max}$.

\textbf{Algorithm 1} summarizes the algorithmic procedure of our proposed alternating direction algorithm for solving Eq.(\ref{a22}). 

After obtaining $\mathbf{X, Y}$, and $\mathbf{E}$, MFC$_0$ can accomplish the following tasks: 
\begin{itemize}
\item \textbf{Subspace Clustering}: We apply normalized cut~\cite{shi2000normalized}\footnote{One can also use K-means clustering on $\mathbf{Y}$. We choose normalized cut here so as to have a fair comparison with SSC and LRR.} to $\mathbf{Y}^T\mathbf{Y}$ to cluster all data samples to their respective subspaces. 
\item \textbf{Data Reconstruction}: We treat the product of $\mathbf{X}$ and $\mathbf{Y}$ as the reconstruction for the data.
\item \textbf{Error Correction}: We regard $\mathbf{E}$ as the error for the data. 
\item \textbf{Representation Learning}: We permute the rows and/or columns of $\mathbf{Y}$ to make it be (appximate) block-diagonal. Then the block $\tilde{\mathbf{Y}}_k$ indicates the representation for subspace $\mathcal{S}_k$.   
\item \textbf{Basis Learning}: We respectively extract $d_0$ columns of $\mathbf{X}$, whose indices correspond to the row numbers of $\tilde{\mathbf{Y}}_k (k=1,\cdots, K$), to form the basis of subspace $\mathcal{S}_k (k=1, \cdots, K)$.  
\end{itemize}

\renewcommand{\algorithmicrequire}{\textbf{Input:}}
\renewcommand{\algorithmicensure}{\textbf{Output:}}

\begin{algorithm}[!htb]
\caption{Solving the problem Eq.(\ref{a22})}
\label{alg:MFC$_0$}
\begin{algorithmic}
\REQUIRE Observed data matrix $\mathbf{Z}$, subspace dimension $d_0$. \\
\ENSURE $\mathbf{X,Y,E}$. \\
    \textbf{Initialize} $\mathbf{X}^{(0)}$, $\mathbf{E}^{(0)}=\mathbf{V}^{(0)}=\mathbf{P}^{(0)}=0$, $\mu^{(0)}=10^{-3}$; \\
    \textbf{Initialize} $\rho=1.2$, $\mu_{max}=10^3$, $\epsilon=10^{-4}$. \\
\WHILE {not converged}
    \STATE
    Update $\mathbf{X,Y,E,V,P}$ using corresponding equations. \\
    \STATE
    Check the convergence condition: \\
    \qquad $\|\mathbf{Z-XY}\|_\infty \leq \epsilon$ or $\|\mathbf{Y-V}\|_\infty \leq \epsilon$.
\ENDWHILE

\end{algorithmic}
\end{algorithm}

\section{Computational Analysis}




The \emph{computational complexity of MFC$_0$ in each iteration} can be summarized as follows:

\myparatight{Calculate $\mathbf{X}$} 
We first perform matrix subtraction of size $m \times n$ and matrix multiplication of size $m \times n$, $n \times d$ with complexities $O(mn)$ and $O(dmn)$, respectively. Then, we compute the SVD of an $m \times d$ matrix and store both the singular vectors---the complexity is $O(m^2d+d^3)$. Finally, we calculate the matrix product of size $m \times d$ and $d \times d$ with complexity $O(d^2m)$ to obtain the orthonormal basis $\mathbf{X}$. The total complexity is $O(m^2d+d^3)+O(d^2m+dmn)$.

\myparatight{Compute $\mathbf{Y}$} 
We just perform matrix multiplication of size $n \times m$ and $m \times d$ and matrix addition of size $m \times d$, with the dominant complexity $O(dmn)$.

\myparatight{Compute $\mathbf{E}$} 
The computation consists of matrix multiplication, matrix substraction, and proximal operator calculation\footnote{Both $\mathbf{E}_{2,1}$ and $\mathbf{E}_1$ have the same complexity $O(mn)$.}. The complexity of each part is $O(dmn)$, $O(mn)$, and $O(mn)$, and the dominant complexity is $O(dmn)$.

\myparatight{Compute $\mathbf{V}$} 
The complexity of matrix addition is $O(dn)$. The nonnegative orthant mapping and sorting of $d$-dim vector respectively need $O(d)$ and $O(d \ln d)$ (in the average case) calculation amount. Thus, the dominant complexity is $O(nd \ln d)$.

As the dimensionality of multi-subspace $d$ is much smaller than the data dimensionality $m$ and sample size $n$, i.e., $m,n >> d$, we conclude that the \emph{dominant complexity} of MFC$_0$ is $O(dm^2+dmn)$.

\begin{table}[t]
\small
\caption{Dominant complexity of SSC, LRR, and MFC$_0$}
\label{compu_complex}
\begin{center}
\begin{tabular}{|c|c|c|c|} 
\hline
Methods &    {SSC}&      {LRR}&      {MFC$_0$}  \\ \hline
Complexity &     $O(n^2m)$&           $O(n^2m+n^3)$&        $O(dm^2+dmn)$  \\ \hline
\end{tabular}
\end{center}
\end{table}

Here, we also discuss the computational complexity of SSC and LRR. \textbf{For SSC}, the computational burden is matrix multiplication, with the complexity $O(n^2m)$. \textbf{For LRR}, the computational burden lies in two parts: matrix multiplication and matrix SVD, the complexity of which are $O(n^2m)$ and $O(n^3)$, respectively. One should note that: (1) the matrix size for matrix multiplication of MFC$_0$  ($m \times n$ and $n \times d$ ) is smaller than that of SSC and LRR ($m \times n$ and $n \times n$); (2) the matrix size for SVD of MFC$_0$ ($m \times d$) is smaller that of LRR ($m \times n$). The dominate complexity of SSC, LRR, and MFC$_0$ are listed in Table~\ref{compu_complex}.

Compared MFC$_0$ with SSC and LRR, we observe that when handling data with high dimensionality, SSC and LRR are relatively faster---the complexity in terms of $m$ is \emph{quadratic} for MFC$_0$, and \emph{linear} for SSC and LRR. Actually, we can leverage random projections, a very 
effective~\cite{wright2009robust} feature extraction method, as a preprocessing to reduce the dimensionality, thus largely decreasing the complexity of MFC$_0$. In contrast, if we deal with large data sample size, MFC$_0$ is more efficient than SSC and LRR---the complexity of SSC and LRR are quadratic and cubic to the sample size $n$, while MFC$_0$ is linear.

In practice, the \emph{overall computational complexity} is determined by not only the complexity of each iteration but also the number of total iterations. Although there is no theoretical guarantees, experimental results (See Tables~\ref{YaleCC}-\ref{USPSCC}) show that MFC$_0$ can converge with 20-40 iterations, while LRR and SSC need 55-70 and 90-110 iterations, respectively. 

\section{Experimental Results}

In this section, we carry out several experiments on both synthetic data and real-world datasets to test the performance of our proposed MFC$_0$ method. 

\myparatight{Compared methods} We compare MFC$_0$ with single subspace learning methods, i.e., PCA, NMF~\cite{lee1999learning}, and Sparse NMF (SNMF)~\cite{peharz2012sparse}; and state-of-the-art multi-subspace learning methods, i.e., SSC and LRR. MFC$_0$ can leverage different regularizations to handle different types of errors. In the experiment, we mainly focus on random corruptions ($||\mathbf{E}||_1$) and sample-specific outliers ($||\mathbf{E}||_{2,1}$). Accordingly, we denote MFC$_0$ as MFC$_0$1 and MFC$_0$2, respectively.

For PCA, we preserve 95 percent of total data variance; For NMF and SNMF, we set their basis number as $K d_0$, the same as MFC$_0$'s. For MFC$_0$, we randomly generate a matrix with size $m \times K d_0 $ to initialize the basis matrix. The hyperparameters of all comparable methods are selected via cross-validation. The stopping criterion for each method is either it reaches the maximal iterations $10^3$ or the objective function values between neighboring iterations have difference less than $10^{-4}$. All experiments are conducted in Matlab R2010b with the platform CPU 3.10 GHz and RAM 16.0 GB.

\myparatight{Evaluation metrics} 
We use clustering accuracy, time, and iterations as the metrics to evaluate the performance of all compared methods.

\subsubsection{Clustering Accuracy} 
The subspace clustering result is evaluated by comparing the estimated label of each data sample with that provided by the ground truth. In the paper, we use clustering accuracy (ACC) to measure the clustering performance. ACC is defined as
\begin{equation}
    ACC = \frac{1}{n} \sum_{i=1}^n \delta(s_i,map(r_i)),
\end{equation}
where $r_i$ and $s_i$ are the estimated label and the ground truth of the $i$-th point; $\delta(x,y)=1$ if $x=y$, and $\delta(x,y)=0$ otherwise. map(x) is the permutation mapping function that maps each label $r_i$ to the equivalent label from the entire data. The best mapping can be efficiently computed by the Kuhn-Munkres algorithm~\cite{plummer1986matching}.

\subsubsection{Time and Iterations}
We use three indices to quantize the computational effciency, i.e., total running time $T(s)$, iterations $I$, and averaged running time per iteration ($T/I$). $T$ indicates the overall effciency; $I$ reflects the convergence rate to reach local solutions, and $T/I$ shows the efficiency per iteration. 

\subsection{Results on Synthetic Data}
\subsubsection{Data reconstruction visualization on 3-dim space}
We visualize data reconstruction in 3-dim ambient space in Figure~\ref{fig1} and compare MFC$_0$ with SSC~\cite{SSC:PAMI2014} and LRR~\cite{LRR:PAMI2014}. The data samples are drawn from three 1-dim independent subspaces, and disturbed by random corruptions and sample-specific outliers, respectively. 
Figure~\ref{fig1} shows that SSC and LRR are unable to remove the errors, which validates our aforementioned argument; While MFC$_0$ fully eliminates them. The reason is that both SSC and LRR use the original contaminated data as the basis, which is unreasonable for data reconstruction. In contrast, MFC$_0$ learns the basis that span the underlying subspace the clean data lie in--Indeed, the learnt basis are $[0, \sqrt{2}/2, \sqrt{2}/2]$, $[\sqrt{2}/2, 0, \sqrt{2}/2]$, and $[\sqrt{2}/2, \sqrt{2}/2, 0]$, respectively; Errors contained in the data are absorbed in the regularization term.

\subsubsection{Results on high-dimensional data}
For high-dimensional data analysis, the synthetic data are generated from $K=5$ independent subspaces with each containing $n_k=100$ samples. All subspaces have the same dimension $d_0=10$ embedded in a $D=100$ dimensional ambient space. The procedure of generating above subspaces is similar to that of~\cite{tangstructure}: the basis $\mathbf{U}_k$ of each subspace are calculated by $\mathbf{U}_{k+1} = \mathbf{TU}_k, 1 \leq k \leq 4$, where $\mathbf{T}=\mathrm{orth(rand(D))} \in \Re^{D \times D}$ is a random orthonormal matrix and $\mathbf{U}_1 \in \Re^{D \times d_0}$ is a random column orthogonal matrix. The data points from each subspace are sampled by $\mathbf{Z}_k=\mathbf{U}_k \mathbf{R}_k$, where elements in $\mathbf{R}_k \in \Re^{d_0 \times n_k}$ are from standard uniform distribution. 

\myparatight{Representation learning and stability} We exhibit MFC$_0$'s ability to learn the representation matrix, i.e., the block-diagonal structure, of data without and with errors, and show its efficiency and stability to obtain the local solution. 


The representation matrix of clean data is shown in Figure $\ref{fig2}\mathrm{(a)}$ and objective function values versus iteration in 10 runs are plotted in Figure $\ref{fig2}\mathrm{(b)}$. From Figure $\ref{fig2}$, we can see that MFC$_0$ can precisely discover the block-diagonal structure when the data samples are clean. Moreover, our proposed first-order optimization algorithm can fast converge (nearly 15 iterations) to a very stable value although using different random initializations. These observations demonstrate that MFC$_0$ is both effective and efficient to analyze the structure of multiple subspaces.

We further consider the cases that data contain errors. For random corruptions and sample-specific outliers with error ratio\footnote{The proportion of data samples are contaminated with errors.} $=$ 0.1, 0.3, and 0.5, the corresponding representation matrices are plotted in Figure $\ref{fig3}$ and Figure $\ref{fig4}$, respectively. We observe that even the data are contaminated with different types of errors to a high level, the representation matrix learnt via MFC$_0$ is still close to be block-diagonal.

\myparatight{Clustering accuracy vs. error ratio} We compare MFC$_0$ with all the other methods in terms of subspace clustering to validate its robustness to resist different types of errors. The performance of all compared methods versus different error ratios, ranging from 0 to 0.8, of random corruptions and sample-specific outliers in the data are depicted in Figure $\ref{fig5}$(a) and $\ref{fig5}$(b). It can be seen that the clustering accuracy of MFC$_0$ is consistently the highest. SSC and LRR come next. Surprisingly, even when the ratio is 0.6, MFC$_0$ can still obtain $>95\%$ accuracy in both cases. Whereas, for LRR and SSC, the accuracies are approximate 70\%, 85\% for random corruptions, and 60\%, 60\% for sample-specific outliers, respectively. The performance of PCA, NMF, and SNMF decrease sharply, which validates that single subspace learning methods fail to work when the error ratio increases.  

\subsubsection{Summary} 
\begin{itemize}
\item MFC$_0$ can accurately reconstruct the data and discover the basis that span each subspace;   

\item MFC$_0$ is capable of obtaining the exact block-diagonal representation matrix when data is clean; 

\item MFC$_0$ shows strong ability to resist errors with a high ratio and is more powerful than state-of-the-art LRR and SSC methods for subspace clustering, not to mention single subspace learning methods; 

\item MFC$_0$ can fast converge to a very stable solution via our proposed first-order optimization algorithm. 
\end{itemize}

\begin{figure}[t]
\begin{center}
\subfigure[]{\label{fig2:a}\includegraphics[width=0.48\linewidth]{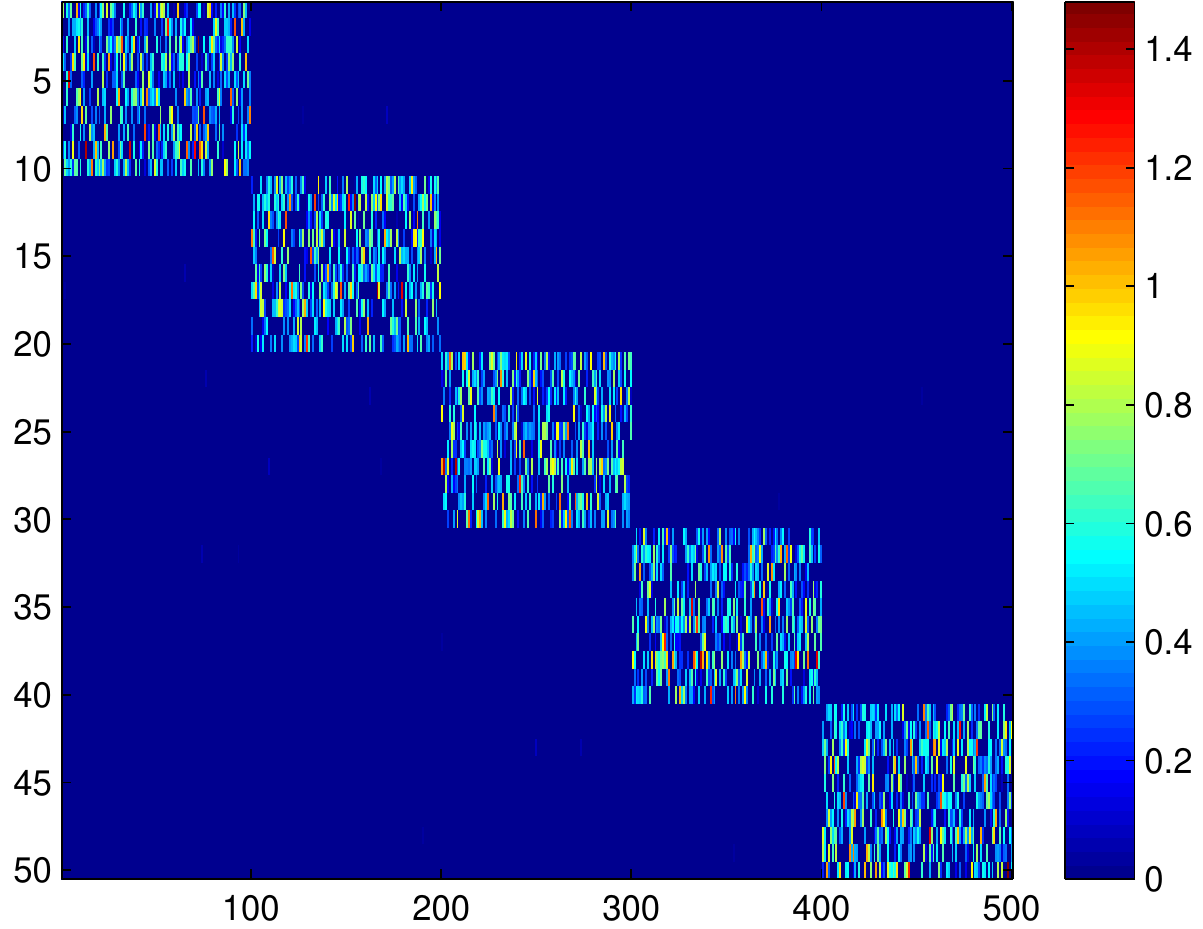}}
\subfigure[]{\label{fig2:b}\includegraphics[width=0.48\linewidth]{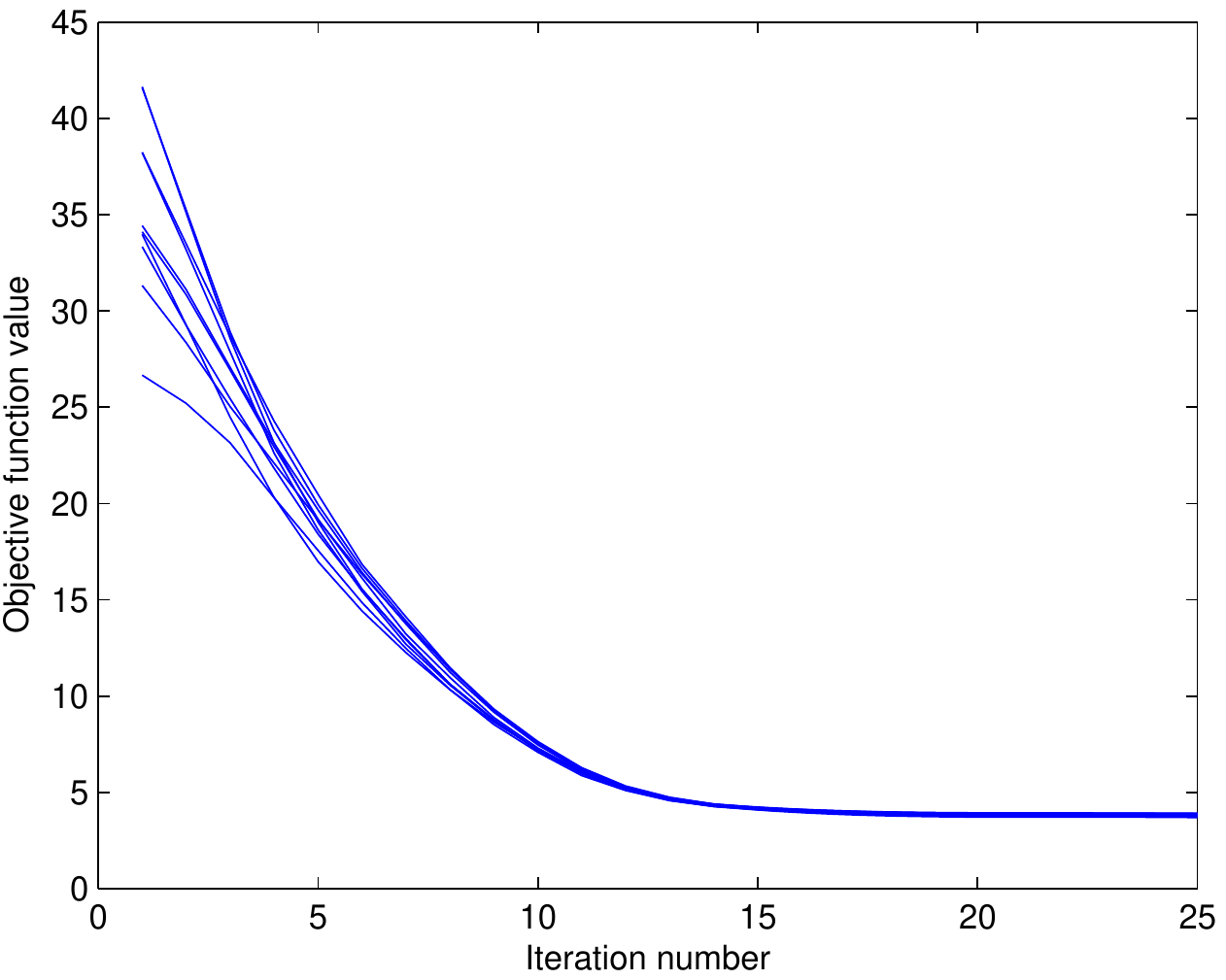}}
\end{center}
\caption{(a) Representation matrix learnt on clean data. (b) Objective function values vs. iteration.}
\label{fig2}
\end{figure}

\begin{figure}[t]
\begin{center}
\subfigure[ratio=0.1]{\label{fig3:b}\includegraphics[width=0.32\linewidth]{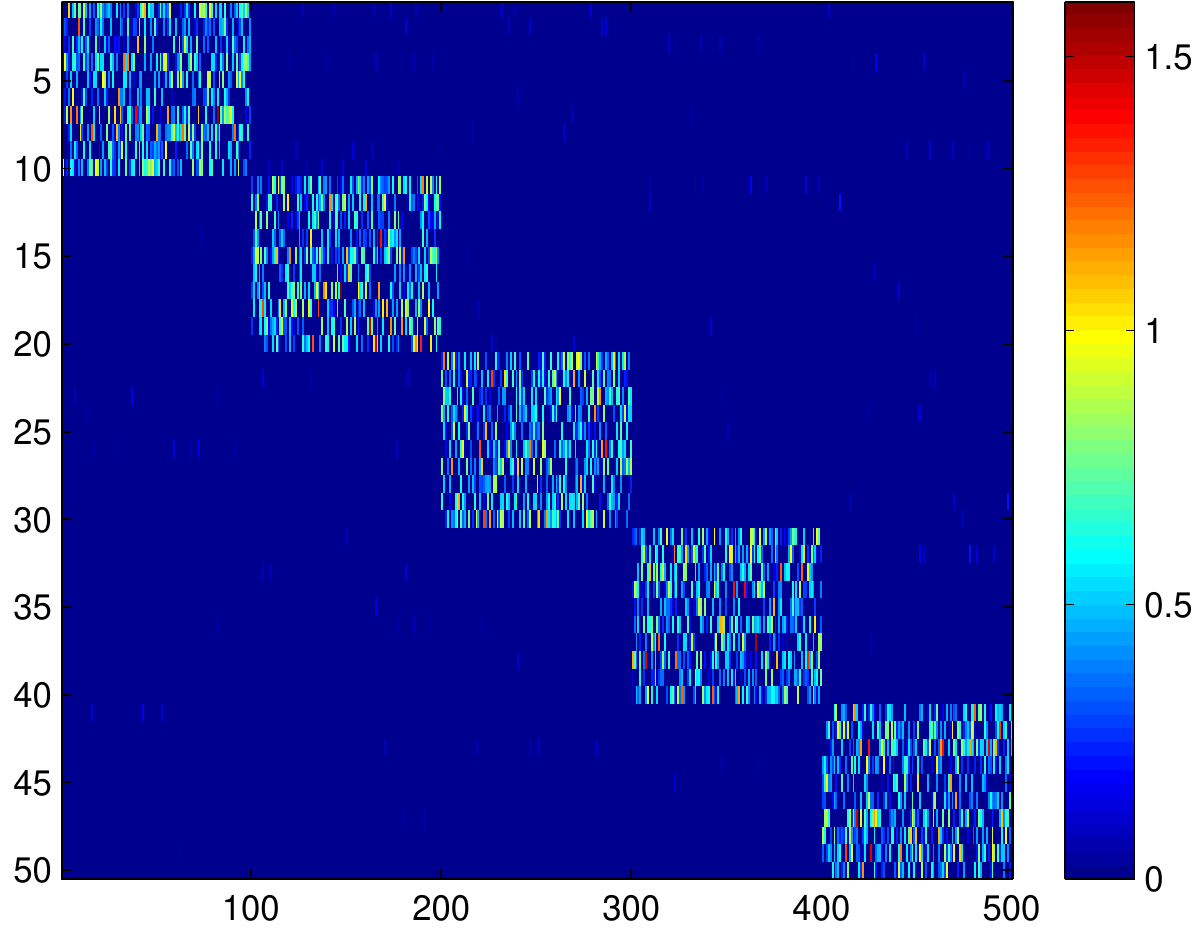}}
\subfigure[ratio=0.3]{\label{fig3:c}\includegraphics[width=0.32\linewidth]{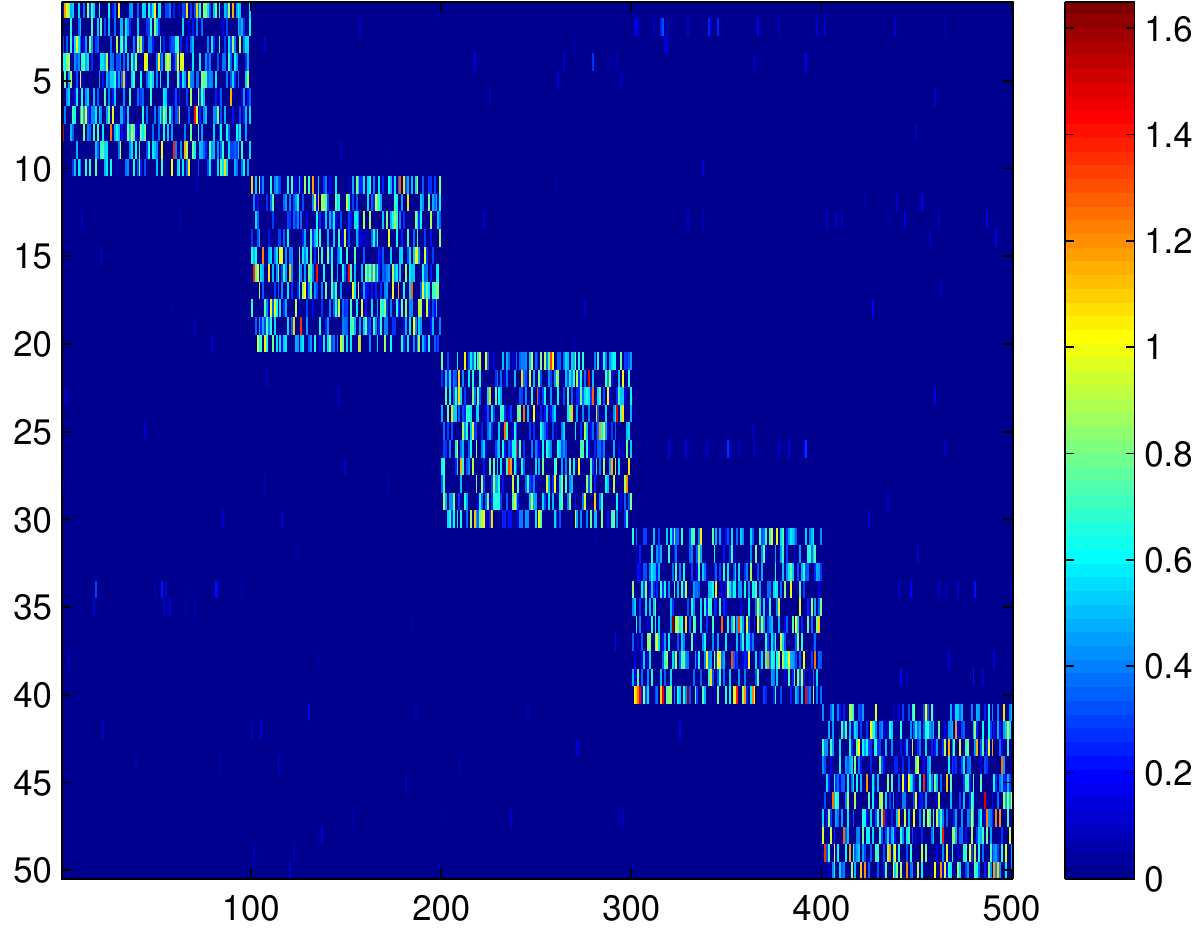}}
\subfigure[ratio=0.5]{\label{fig3:d}\includegraphics[width=0.32\linewidth]{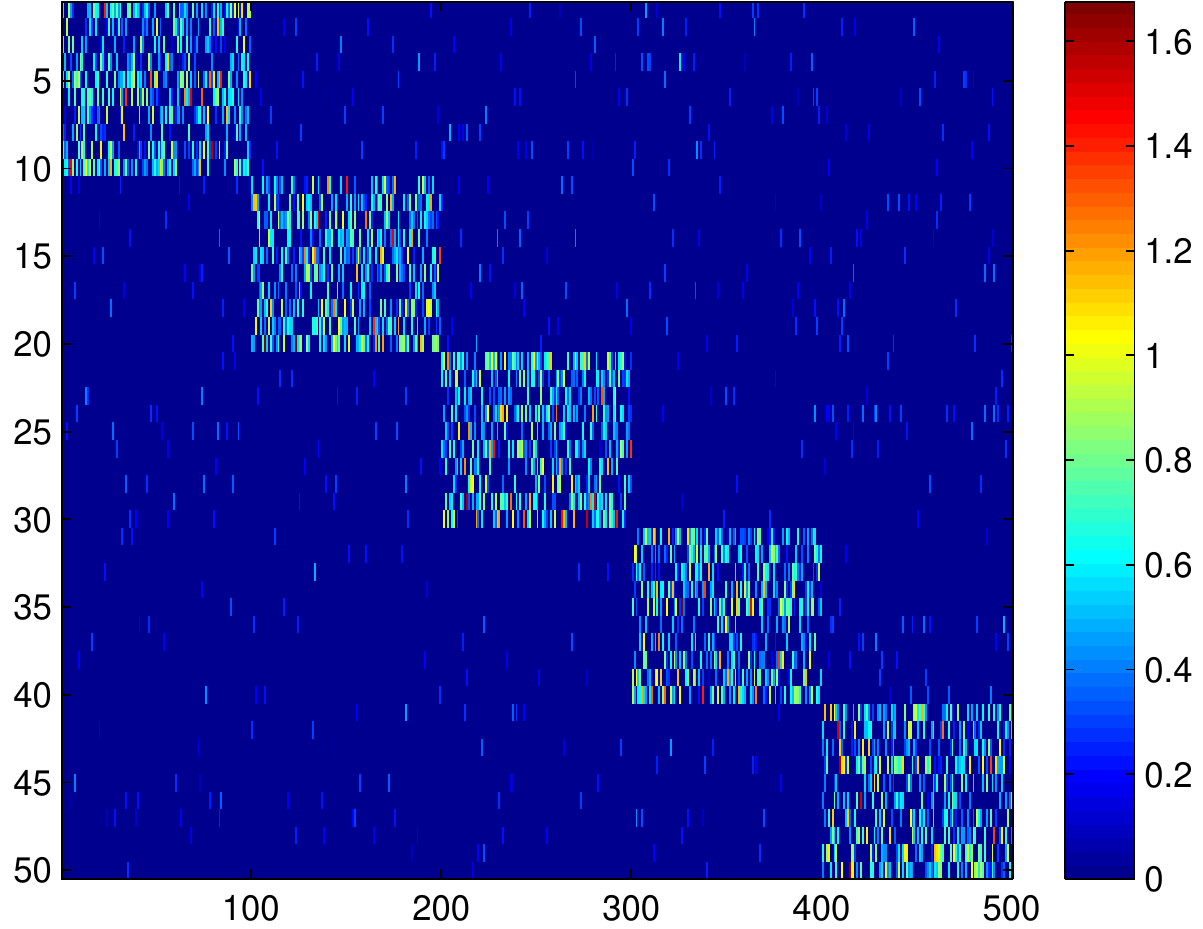}}
\end{center}
\caption{Representation matrix learnt on data contaminated by different ratios of random corruptions.}
\label{fig3}
\end{figure}

\begin{figure}[t]
\begin{center}
\subfigure[ratio=0.1]{\label{fig4:b}\includegraphics[width=0.32\linewidth]{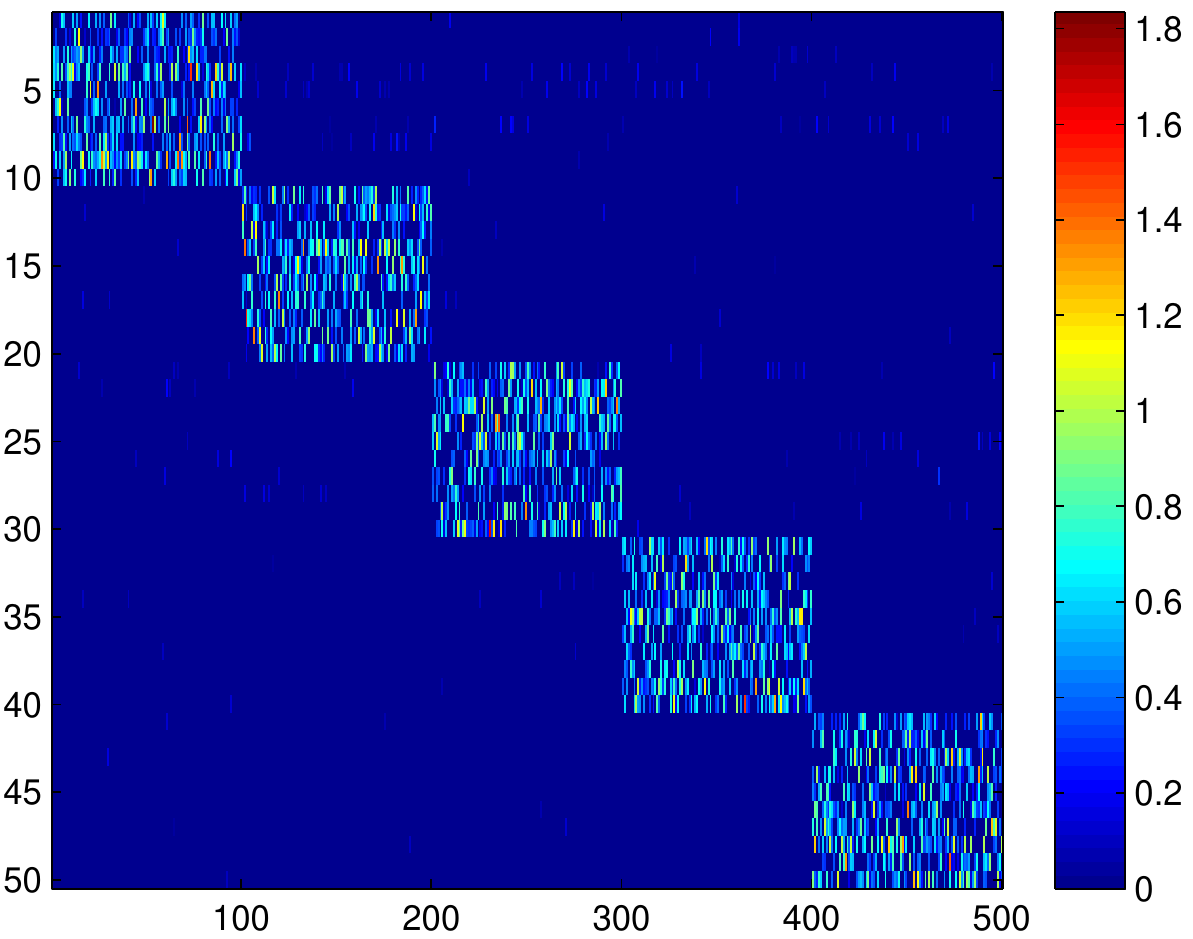}}
\subfigure[ratio=0.3]{\label{fig4:c}\includegraphics[width=0.32\linewidth]{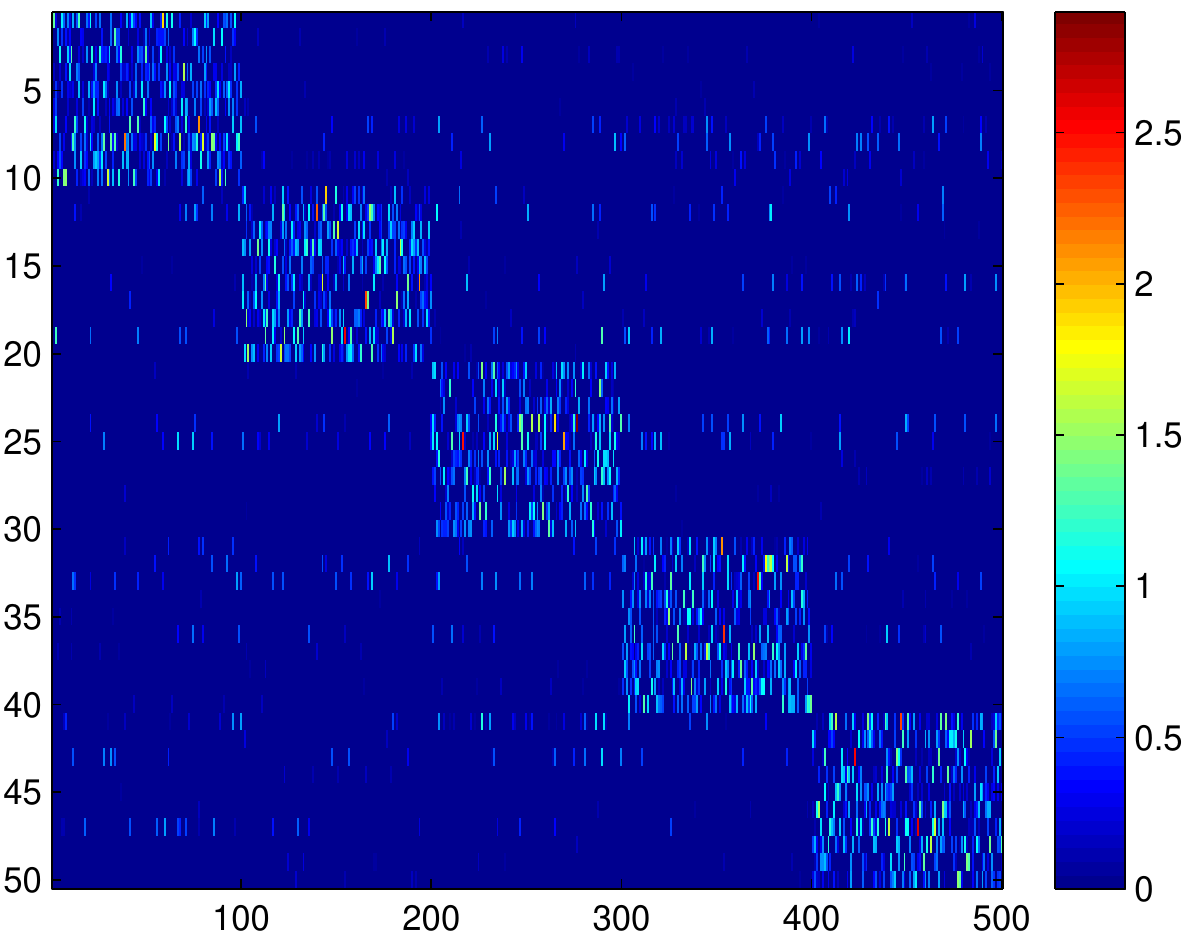}}
\subfigure[ratio=0.5]{\label{fig4:d}\includegraphics[width=0.32\linewidth]{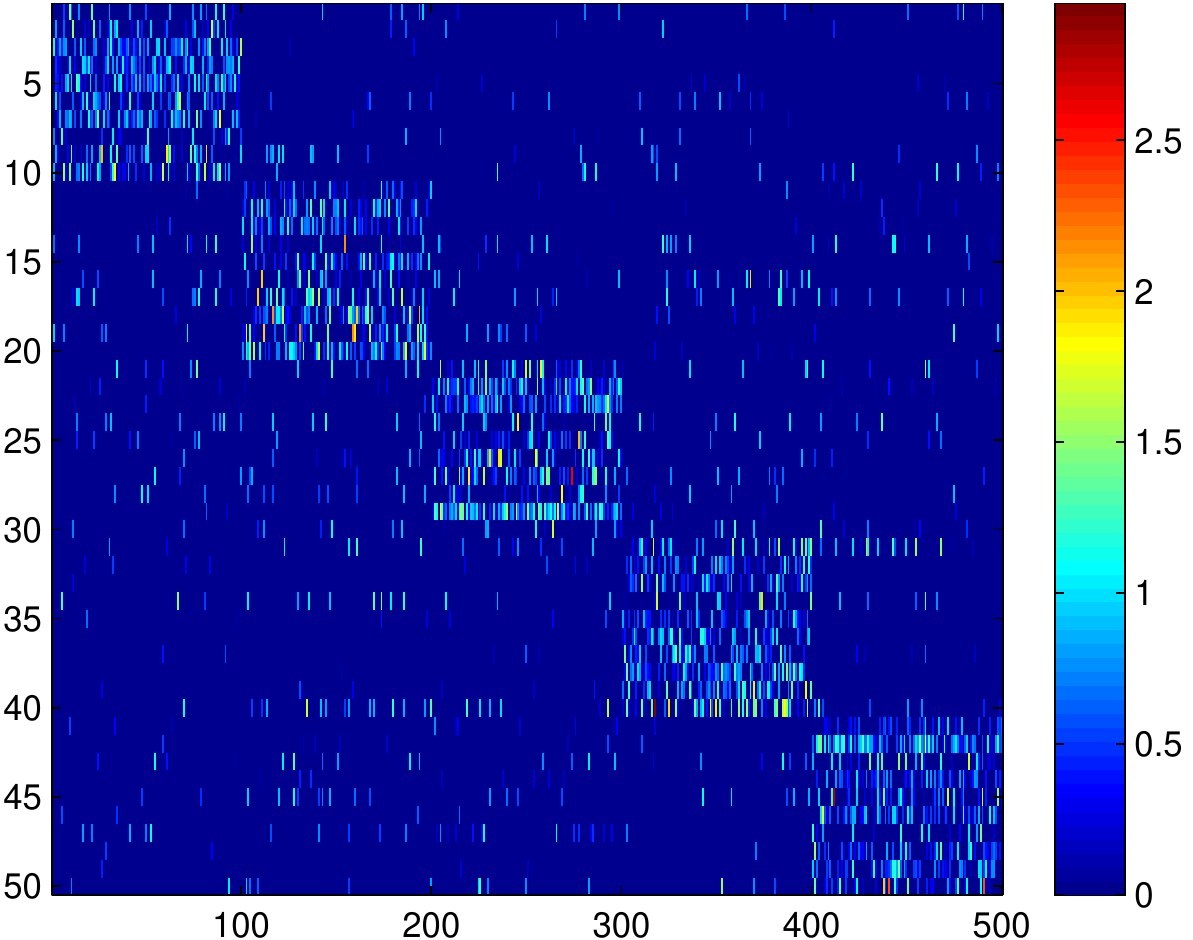}}
\end{center}
\caption{Representation matrix learnt on data contaminated by different ratios of sample-specific outliers.}
\label{fig4}
\end{figure}

\begin{figure}[!hbt]
\begin{center}
\subfigure[]{\label{fig5:a}\includegraphics[width=0.48\linewidth]{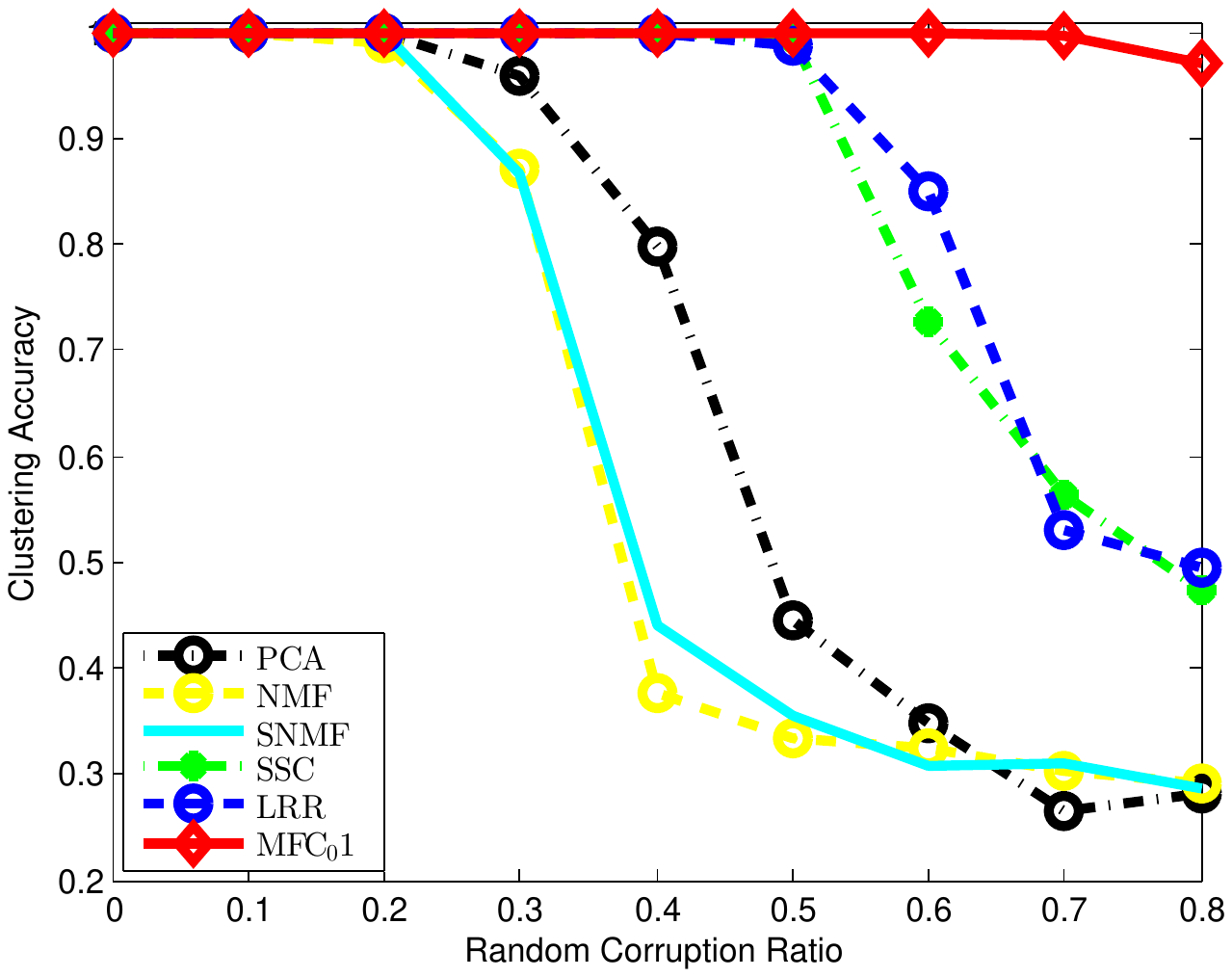}}
\subfigure[]{\label{fig5:b}\includegraphics[width=0.48\linewidth]{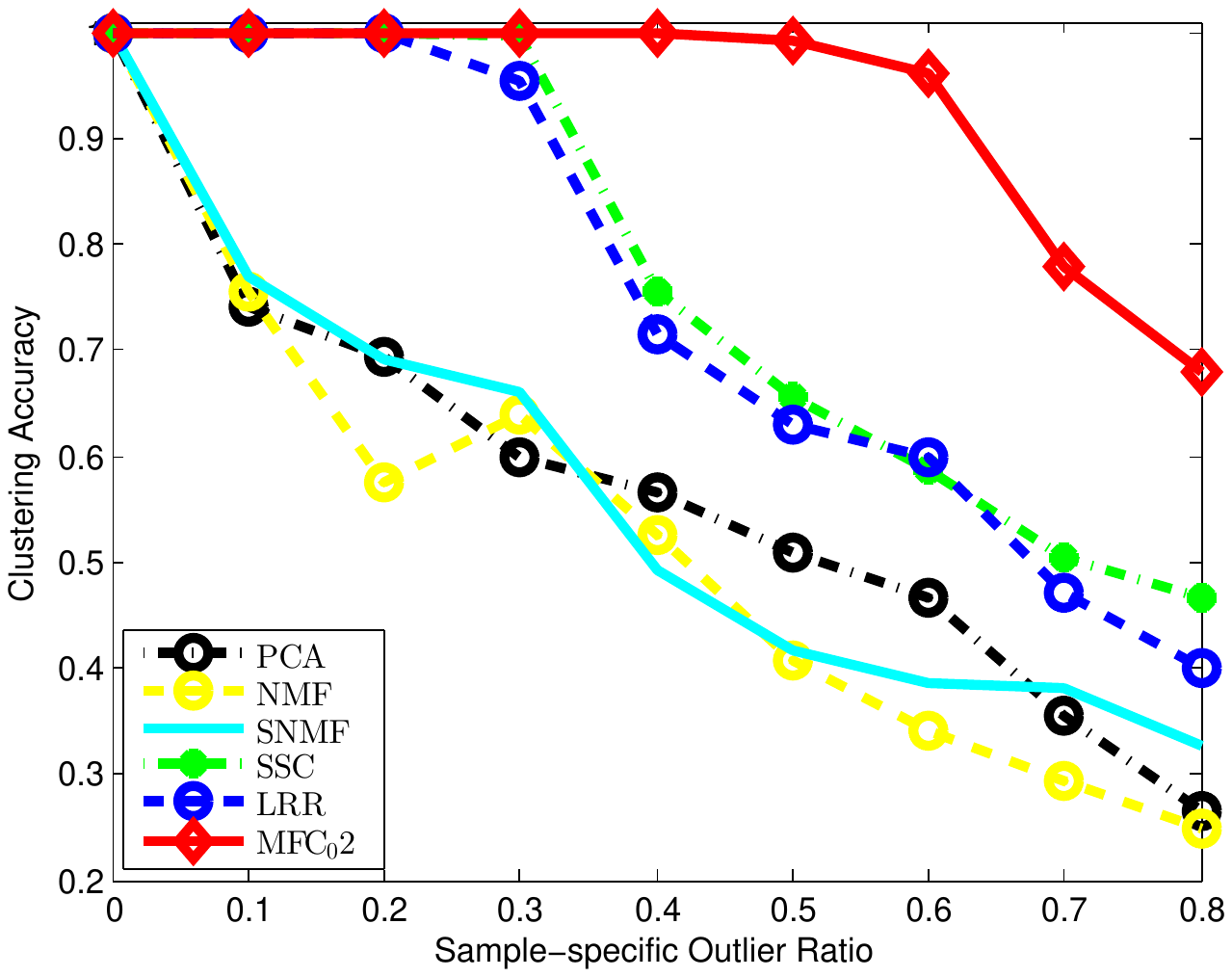}}
\end{center}
\caption{Clustering accuracy vs. ratio of (a) random corruptions and (b) sample-specific outliers in the data.}
\label{fig5}
\end{figure}

\subsection{Results on Real-World Datasets}

In this part, we pay close attention to the tasks: subspace clustering, as well as associated computational complexity; subspace basis learning and subspace recovery. Experiments are carried out on three real-world face datasets: AR\footnote{\url{http://www2.ece.ohio-state.edu/~aleix/ARdatabase.html}}, Yale\footnote{\url{http://vision.ucsd.edu/content/yale-face-database}}, and Extended Yale B (EYaleB)\footnote{\url{http://vision.ucsd.edu/~leekc/ExtYaleDatabase/ExtYaleB.html}} and the USPS handwritten digit dataset\footnote{\url{http://www.csie.ntu.edu.tw/~cjlin/libsvmtools/datasets/multiclass.html\#usps}}. 

\subsubsection{Dataset Description}

AR contains over 3,000 images of 126 individuals (70 men and 56 women). They are taken at two different occasions with different facial expressions, illumination conditions, and occlusions (sun glasses and scarf) under strictly controlled conditions. Raw images are of 768x576 pixels and 24 bits of depth. In the experiment, images are resized to 48x48 pixels. 

Yale contains 165 grayscale images of 15 individuals. There are 11 images per subject, one per different facial expression or configuration: center-light, w/glasses, happy, left-light, w/no glasses, normal, right-light, sad, sleepy, surprised, and wink. We simply use the cropped images with size 32x32 pixels. 

EYaleB consists of 2414 images of 38 subjects under 9 poses and 64 illumination conditions. The subset used in the experiment has 20 individuals and around 64 near frontal images under different illuminations per individual. All images are cropped to 32x32 pixels.

USPS contains 9198 handwritten digit images (from 0 to 9) and the image size is 16x16. In the experiment, we choose 100 images of each digit, thus 1000 images in total. 

Some sample images from the four datasets are shown in Figure $\ref{fig6}$. Statistics of the selected subsets are listed in Table~\ref{Stats}.

\begin{table}[t]
\small
\caption{Statistics of selected subsets of four datasets}
\label{Stats}
\begin{center}
\begin{tabular}{|c|c|c|c|c|} 
\hline
&    {AR}&      {Yale}&      {EYaleB}&   {USPS}   \\
\hline
K &     3&           11&        20&          10   \\ \hline
$n_k$ &   15&          15&        64&          100     \\ \hline
m &     48x48&       32x32&     32x32&       16x16     \\
\hline
\end{tabular}
\end{center}
\footnotesize
\end{table}

\begin{figure}[t]
\begin{center}
\subfigure[]{\label{fig6:a}\includegraphics[width=0.48\linewidth]{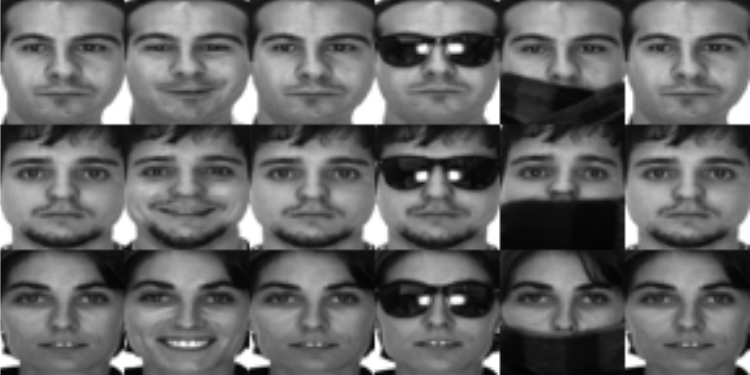}}   
\subfigure[]{\label{fig6:d}\includegraphics[width=0.48\linewidth]{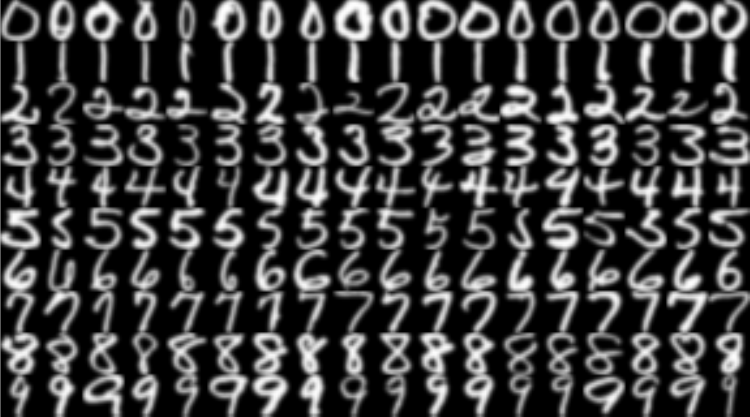}} 
\subfigure[]{\label{fig6:b}\includegraphics[width=0.48\linewidth]{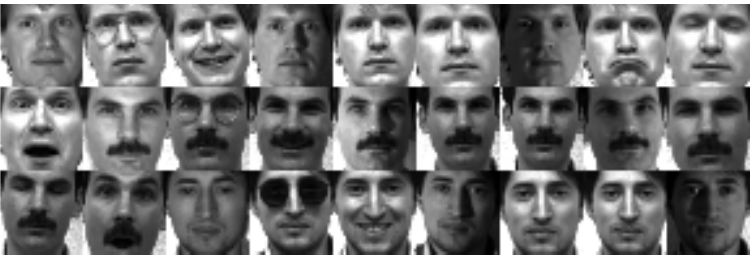}}  
\subfigure[]{\label{fig6:c}\includegraphics[width=0.48\linewidth]{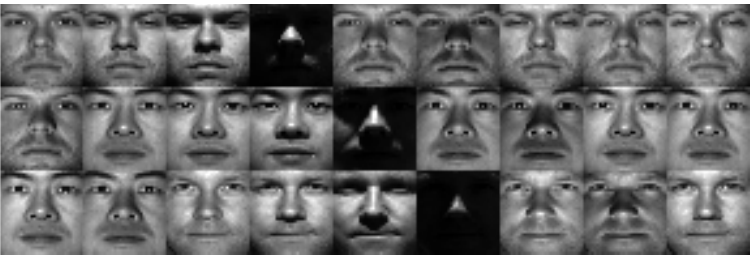}}  
\end{center}
\caption{Sample images from (a) AR, (b) USPS, (c) Yale, and (d) EYaleB.}
\label{fig6}
\end{figure}

\subsubsection{Subspace Dimension Estimation}

For MFC$_0$, we need to know the subspace dimension of each subject in advance. A simple yet effective way is to calculate the number of nonzero singular values of each subject. Figure $\ref{fig7}$ depicts the singular values of several randomly chosen subjects in the four datasets. The subspace dimensions $d_0$ for AR, Yale, EYaleB, and USPS are approximate 12, 10, 10, and 12 respectively. Note that for real-world datasets, the singular values often gradually close to zero, validating that the images are contaminated with errors. The higher dimension of AR reflects its more complicated face distribution than Yale and EYaleB. 
It is necessary to point out that, we cannot distinguish which type of error contained in the images, therefore we report  

\begin{figure}[t]
\begin{center}
\subfigure[]{\label{fig7:a}\includegraphics[width=0.48\linewidth]{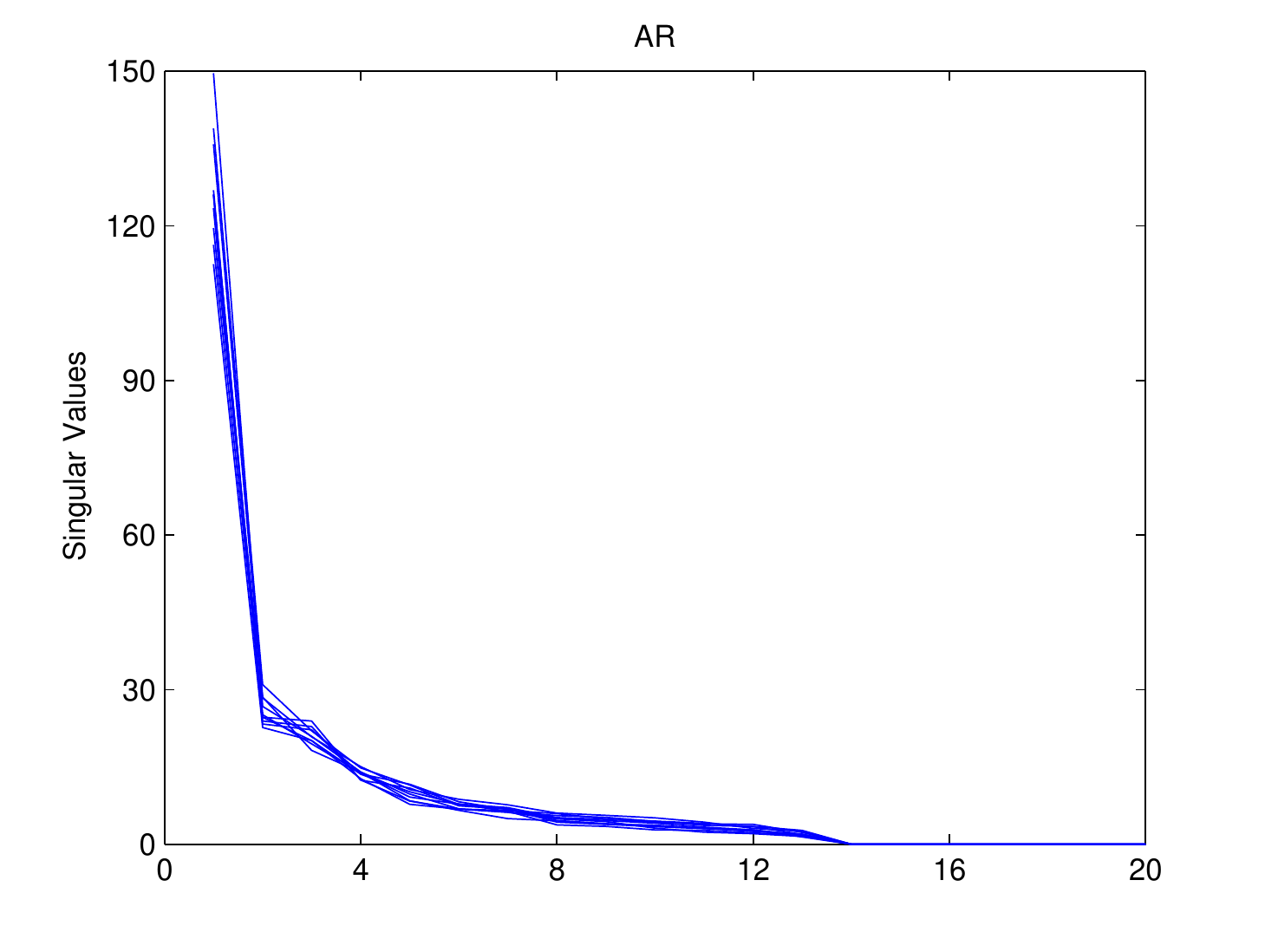}}  
\subfigure[]{\label{fig7:b}\includegraphics[width=0.48\linewidth]{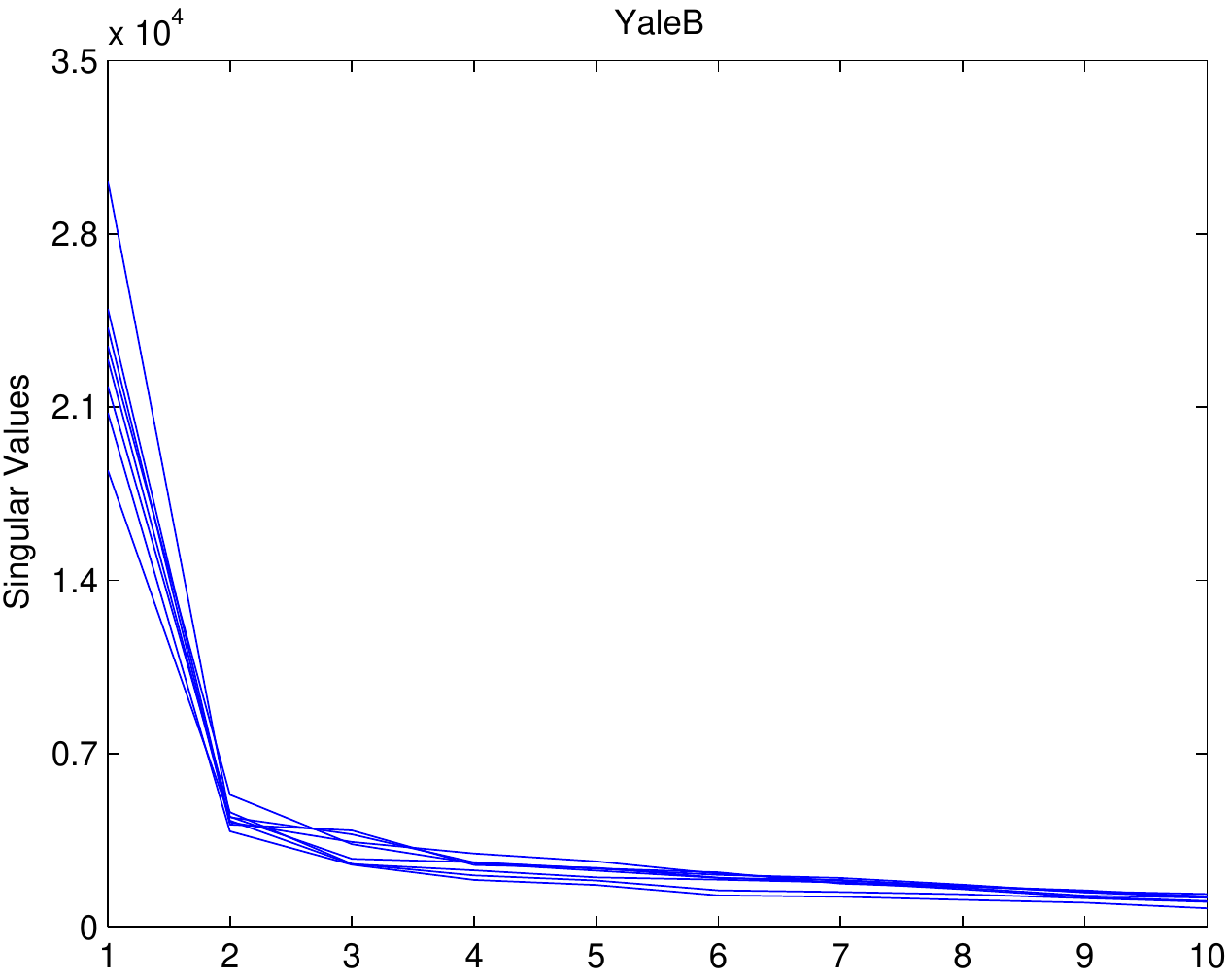}}  
\subfigure[]{\label{fig7:c}\includegraphics[width=0.48\linewidth]{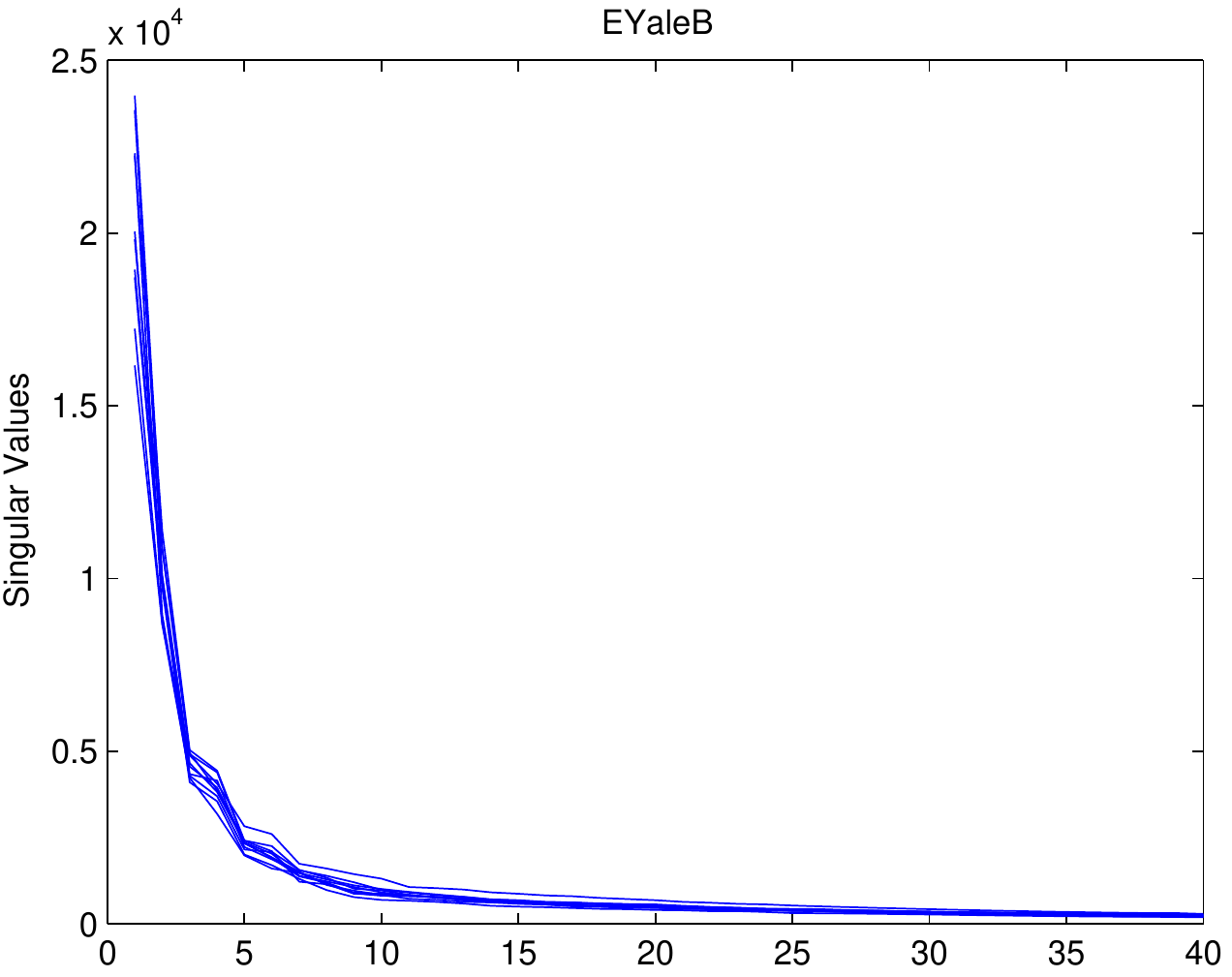}}  
\subfigure[]{\label{fig7:d}\includegraphics[width=0.48\linewidth]{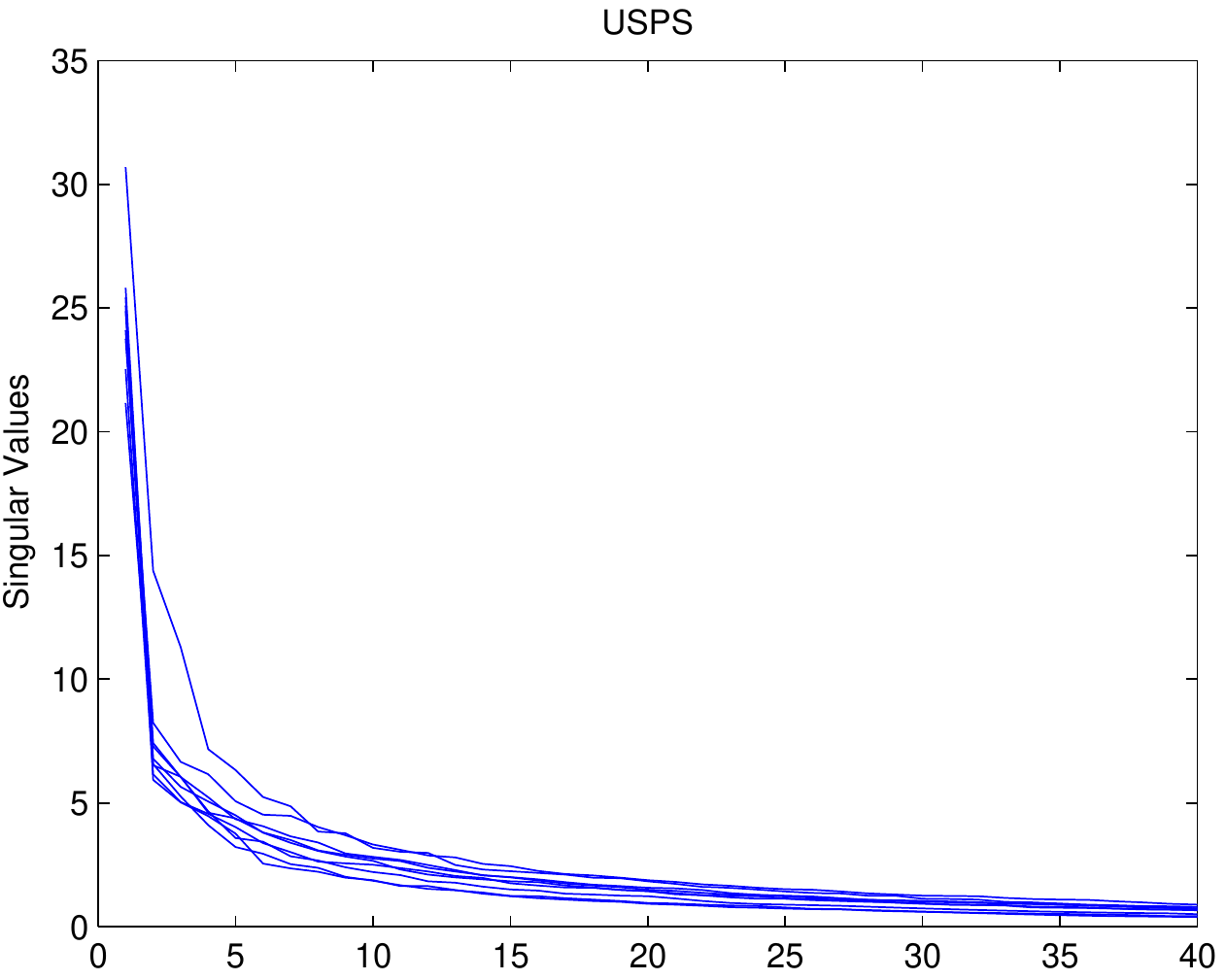}}  
\end{center}
\caption{Singular values of several randomly chosen subjects in (a) AR, (b) Yale, (c) EYaleB, and (d) USPS. }
\label{fig7}
\end{figure}

\subsubsection{Subspace Clustering}

In this part, we perform subspace clustering on real-world datasets: Given face images or handwritten digits of multiple categories, our task is to group them according to their respective subspace. Our purpose is to verify that it is more advantageous to (1) model face distribution using multiple subspaces than using the single one; (2) learn the basis rather than leverage the data themseleves as the basis. 


Tables~\ref{Yale} and~\ref{EYaleB} show the clustering performance on Yale and EYaleB, respectively. To study the effect of category number, we choose $K$ that is ranging from 2 to 11 on Yale and from 2 to 20 with an interval 2 on EYaleB.
From the results, we have the following observations: 

(1) \textbf{As a whole, MFC$_02$ performs the best on both datasets}. By learning each subject the respective subspace, together with a direct column $l_0$-norm constraint and an error correction term, MFC$_02$ is discriminative to separate face images from the subjects. Essentially, this is the key factor that makes it outperform single subspace analysis methods. Moreover, using a column $l_0$-norm constraint to directly control the sparsity, MFC$_02$ is superior to the indirect $l_1$ penalty and avoids tuning the sparse hyperparameter; 

(2) \textbf{SSC and LRR can also achieve promising results} as they are multi-subspace learning methods with error correction. However, that they use contaminated face images as the basis is problematic and irrational, the accuracy gap between them and MFC$_02$ validates this argument; 

(3) \textbf{Errors contained in face images are more likely to be sample-specific outliers}, as on both Yale and EYaleB, the result of MFC$_02$ is better than that of MFC$_01$. A corroborative evidence is that LRR, also using the $\mathbf{E}_{2,1}$ term, is slightly superior to SSC, the $\mathbf{E}_{1}$ term; 

(4) \textbf{Sparse constraint for clustering is important}. The better results of SNMF than that of NMF in both datasets verify the importance using sparse constraint;

(5) \textbf{Modeling face images distribution using multi-subspace} Since PCA, NMF, and SNMF are single subspace learning methods, so their performance are worse than SSC, LRR, MFC$_01$, MFC$_02$. When the sample size of each subject is small in Yale, the structure of multi-subspace is not so obvious and the accuracy gap is not remarkable. However, when the sample size of each subject in EYaleB is sufficient, all of them obtain poor results on the whole. Even worse, they fail to work when $K$ increases to a certain value, say 10. While for remaining methods, they still work well.

We also implement subspace clustering on USPS with $K$ from 2 to 10. Our aim is to see whether handwritten digits are distributed in ``independent subspaces", as no pervious work mentions such assumption. 

\textbf{Subspaces of handwritten digits are probably not independent}. From the results listed in Table~\ref{USPS}, we notice that all PCA, NMF, and SNMF have promising results, even comparable with SSC and LRR in some $K$s. Still, our MFC$_01$ performs the best, but the superiority is not so obvious as that in EYaleB. One possible reason is that the subspaces of handwritten digits are not independent in essence. In this sense, the distribution of handwritten digits modeled by disjoint subspaces (or submanifolds) may be more reasonable.

\begin{table}[t]
\caption{Clustering performance on Yale}
\label{Yale}
\begin{center}
\begin{tabular}{|c|c|c|c|c|c|c|c|}
\hline
{K}&        PCA&        {NMF}&      {SNMF}&     {SSC}&     {LRR}&       {MFC$_01$}&    {MFC$_02$} \\
\hline
2 &         0.76&        0.82&       0.82&      0.87&    \textbf{0.95}& \textbf{0.95}&  \textbf{0.95}  \\ \hline
3 &         0.61&        0.70&       0.69&      0.64&    0.68&        0.70&    \textbf{0.74}  \\ \hline
4 &         0.63&      0.64&       0.68&      0.61&    0.64&        0.66&   \textbf{0.76} \\ \hline
5 &         0.53&      0.64&       0.67&      0.67&    0.67&        0.65&   \textbf{0.69} \\ \hline
6 &         0.56&      0.53&       0.58&      0.65&    0.58&        0.63&   \textbf{0.68} \\ \hline
7 &         0.53&      0.55&       0.56&      0.62&    0.57&        0.58&   \textbf{0.61} \\ \hline
8 &         0.55&      0.49&       0.57&      0.60&  \textbf{0.63}& 0.56&       0.60 \\ \hline
9 &         0.43&      0.38&       0.49&      0.55&    0.57&        0.51&   \textbf{0.58} \\ \hline
10 &        0.45&      0.47&  \textbf{0.54}&  \textbf{0.54}&   \textbf{0.54}& 0.51&  \textbf{0.54} \\ \hline
11 &        0.54&      0.49&       0.54&      0.57&    0.57&        0.50&   \textbf{0.59} \\ \hline \hline
Avg.&       0.56&        0.57&       0.61&      0.63&      0.64&        0.63&   \textbf{0.66} \\ \hline
\end{tabular}
\end{center}
\footnotesize
\end{table}

\begin{table}[t]
\caption{Clustering performance on EYaleB}
\label{EYaleB}
\begin{center}
\begin{tabular}{|c|c|c|c|c|c|c|c|}
\hline
{K}&        {PCA}&     {NMF}&    {SNMF}&    {SSC}&     {LRR}&          {MFC$_01$}&         {MFC$_02$} \\
\hline
2 &         0.51&     0.80& \textbf{1.00}&0.98&    \textbf{1.00}&    \textbf{1.00}&       \textbf{1.00}  \\ 
\hline
4 &         0.35&     0.59&    0.63&      0.88&    0.86&             0.85&               \textbf{0.89}  \\ \hline
6 &         0.30&   0.72&    0.57&      0.84&    0.82&             0.84&               \textbf{0.86} \\ \hline
8 &         0.20&   0.37&    0.48&      0.76&    0.74&             0.69&               \textbf{0.76} \\ \hline
10 &        0.23&   0.45&    0.41&      0.74&    0.71&             0.68&               \textbf{0.75} \\ \hline
12 &        0.22&   0.37&    0.41&      0.67&    \textbf{0.70}&    \textbf{0.70}&      \textbf{0.70} \\ \hline
14 &        0.18&   0.36&    0.39&      0.64&    0.68&             \textbf{0.77}&          0.72 \\ \hline
16 &        0.18&   0.37&    0.39&      0.63&    0.67&             0.72&               \textbf{0.73} \\ \hline
18 &        0.15&   0.35&    0.35&      0.63&    0.68&             \textbf{0.74}&      \textbf{0.74} \\ \hline
20 &        0.15&   0.37&    0.37&      0.60&    0.66&             \textbf{0.72}&      \textbf{0.72} \\ \hline \hline
Avg.&       0.25&     0.47&     0.49&      0.74&    0.75&               0.77&            \textbf{0.79} \\ \hline
\end{tabular}
\end{center}
\footnotesize
\end{table}

\begin{table}[t]
\caption{Clustering performance on USPS}
\label{USPS}
\begin{center}
\begin{tabular}{|c|c|c|c|c|c|c|c|} \hline
{K}&        PCA&     {NMF}&     {SNMF}&     {SSC}&     {LRR}&          {MFC$_01$}&         {MFC$_02$} \\
\hline
2 &         0.92&     0.98&      0.96&      0.99&      0.99&             \textbf{1.00}&       \textbf{1.00}  \\ \hline
3 &         0.80&     0.91&      0.86&      0.95&    0.91&             \textbf{0.94}&       \textbf{0.94}  \\ \hline
4 &         0.63&   0.76&      0.78&      0.85&    0.89&             \textbf{0.93}&             0.87 \\ \hline
5 &         0.67&   0.64&      0.74&      0.80&    0.79&                    0.87&      \textbf{0.88} \\ \hline
6 &         0.57&   0.56&      0.67& \textbf{0.80}&  0.77&                  0.78&             0.76 \\ \hline
7 &         0.56&   0.55&      0.58&      0.68&    0.65&             \textbf{0.74}&             0.64 \\ \hline
8 &         0.56&   0.57&      0.57&      0.54&    0.62&                    0.66&            \textbf{0.69} \\ \hline
9 &         0.54&   0.52&      0.51&      0.46&    0.61&             \textbf{0.68}&               0.63 \\ \hline
10 &        0.54&   0.47&      0.48&      0.53&    0.47&             \textbf{0.64}&               0.58 \\ \hline \hline
Avg.&       0.64&     0.66&      0.68&      0.73&      0.74&              \textbf{0.80}&             0.78 \\ \hline
\end{tabular}
\end{center}
\footnotesize
\end{table}

\subsubsection{Computational Complexity}

Another advantage of MFC$_0$ against LRR and SSC is its efficient computation in terms of sample size. Therefore, our aim in this part is to validate that when increasing the sample size MFC$_0$ is more efficient than SSC and LRR. As mentioned, we quantize the efficiency of each method using total running time ($T(s)$), iterations ($I$), and averaged running time per iteration ($T/I$). 

The results on Yale, EYaleB, and USPS are listed in Tables~\ref{YaleCC},~\ref{EYaleBCC},~\ref{USPSCC}, respectively. From Table~\ref{YaleCC}, we see that 

(1) \textbf{Running time per iteration $T/I$ of MFC$_01$ or MFC$_02$ is bigger than that of SSC and LRR on dataset with small sample size}. The averaged $T/I$ of MFC$_01$ or MFC$_02$ is approximately 5 and 3 times of SSC and LRR. It is because when the sample size of each subject $n_k=15$ is small in Yale, the dimensionality of data samples ($m=32 \times 32$) dominates the computational complexity in each iteration. Remembering that the complexity of MFC$_0$ is quadratic with respect to $m$, while of LRR and SSC is linear. 

(2) \textbf{Iterations $I$ of MFC$_0$ is much smaller than those of SSC and LRR}. Despite of higher $T/I$, the averaged iteration $I$ of MFC$_01$ or MFC$_02$ is 24 or 23, much smaller than those of SSC and LRR, which are 93 and 57, respectively. Such observation demonstrats MFC$_0$'s faster convergence rate than LRR and SSC. Ultimately, the total running time $T$ of these methods are similar.

(3) \textbf{Increasing sample size makes MFC$_0$ more advantageous}. When the sample size of each subject increases to 64 in EYaleB, the results in Table~\ref{EYaleBCC} manifest that $T/I$ values of SSC and LRR are larger than that of MFC$_01$ or MFC$_02$ on average. It is due to the fact that the computational complexity of MFC$_0$ is linear with respect to $n$, while quadratic and cubic of LRR and SSC. Similar to the results on Yale, compared with SSC (110) and LRR (65), the averaged values $I$ of MFC$_01$ (31) and MFC$_02$ (30) are much smaller. Consequently, the total running time $T$ of MFC$_01$ and MFC$_02$ are just 1/4, 1/5.7 and 1/6, 1/8.5 of SSC and LRR. To further highlighting the efficiency of MFC0$_0$, we add samples to 100 in USPS and related results are displayed in Table~\ref{USPSCC}. We observe that when the category number is 10, it takes SSC and LRR almost 94 and 255 seconds to implement the clustering task, while both MFC$_01$ and MFC$_02$ consume less than 10 seconds. Again, the less $I$ of MFC$_01$ or MFC$_02$ than SSC and LRR suggests they can reach the stable solution more quickly.


\begin{table*}[t]
\begin{center}
 \caption{Computational complexity on Yale}
\centering
\label{YaleCC}
\begin{tabular}{|c|c|c|c|c|c|c|c|c|c|c|c|c|}
\hline &\multicolumn{3}{|c|}{SSC} & \multicolumn{3}{|c|}{LRR} & \multicolumn{3}{|c|}{MFC$_01$} & \multicolumn{3}{|c|}{MFC$_02$}\\
\hline
K   &T(s)  &I   &T/I   &T(s)  &I  &T/I    &T(s)    &I  &T/I     &T(s)   &I  &T/I    \\  \hline \rule{0pt}{6pt}
2 &0.120 &92  &\textbf{0.001} &\textbf{0.105} &57 &0.002  &0.191   &\textbf{22} &0.009   &0.218 &24 &0.009  \\  \hline \rule{0pt}{7pt}
3 &0.196 &99  &\textbf{0.002} &\textbf{0.167} &55 &0.003  &0.281   &\textbf{22} &0.013   &0.316 &\textbf{22}  &0.014   \\ \hline \rule{0pt}{7pt}
4 &0.303 &91  &\textbf{0.003} &\textbf{0.286} &56 &0.005  &0.416   &\textbf{22} &0.019   &0.446 &\textbf{22}  &0.020  \\  \hline \rule{0pt}{7pt}
5 &0.445 &96  &\textbf{0.005} &\textbf{0.396} &57 &0.007  &0.592   &\textbf{22} &0.027   &0.537 &\textbf{22} &0.024   \\  \hline \rule{0pt}{7pt}
6 &\textbf{0.483} &89  &\textbf{0.005} &0.551 &57 &0.010  &0.727   &\textbf{23} &0.032   &0.747 &\textbf{23}  &0.033   \\  \hline \rule{0pt}{7pt}
7 &\textbf{0.643} &89  &\textbf{0.007} &0.718 &57 &0.013  &0.905   &\textbf{23} &0.039   &0.923 &\textbf{23}  &0.040   \\  \hline \rule{0pt}{7pt}
8 &0.830 &90  &\textbf{0.009} &\textbf{0.773} &57 &0.014  &1.149   &27 &0.043   &1.118  &\textbf{24}  &0.047   \\  \hline \rule{0pt}{7pt}
9   &\textbf{1.007} &92  &\textbf{0.011} &1.051 &57 &0.018  &1.339   &25 &0.054   &1.503  &\textbf{24} &0.063   \\  \hline \rule{0pt}{7pt}
10  &\textbf{1.196} &93  &\textbf{0.013} &1.368 &57 &0.024  &1.630   &25 &0.065   &1.598  &\textbf{24} &0.067   \\  \hline \rule{0pt}{7pt}
11  &1.480 &94  &\textbf{0.016} &\textbf{1.380} &58 &0.024  &1.760   &28 &0.063   &1.765  &\textbf{24} &0.074   \\  \hline \rule{0pt}{7pt}
Avg.&\textbf{0.670} &93  &\textbf{0.007} &0.679 &57 &0.012  &0.899   &24 &0.036 &0.917  &\textbf{23}  &0.039  \\
\hline
\end{tabular}
\end{center}
\end{table*}

\begin{table*}[t]
\begin{center}
 \caption{Computational complexity on EYaleB}
\centering
\label{EYaleBCC}
\begin{tabular}{|c|c|c|c|c|c|c|c|c|c|c|c|c|}
\hline &\multicolumn{3}{|c|}{SSC} & \multicolumn{3}{|c|}{LRR} & \multicolumn{3}{|c|}{MFC$_01$} & \multicolumn{3}{|c|}{MFC$_02$}\\
\hline
K  &T(s)    &I    &T/I    &T(s)  &I  &T/I     &T(s) &I  &T/I      &T(s) &I  &T/I    \\  \hline \rule{0pt}{6pt}
2 &1.73  &107   &\textbf{0.02}   &1.73  &61 &0.03    &1.14  &\textbf{24}  &0.05     &\textbf{0.96}  &\textbf{24}  &0.04    \\ \hline \rule{0pt}{7pt}
4 &5.69  &108   &\textbf{0.05}   &6.32   &64 &0.10    &2.70 &25 &0.11     &\textbf{2.19}  &\textbf{24}  &0.09    \\  \hline \rule{0pt}{7pt}
6 &12.20 &109   &\textbf{0.11}  &14.19   &64 &0.22    &4.95 &25 &0.20     &\textbf{3.36}  &\textbf{24}  &0.14    \\  \hline \rule{0pt}{7pt}
8 &21.33 &110   &0.19  &25.14  &64 &0.39    &7.62 &\textbf{27}  &0.28     &\textbf{5.48}  &32 &0.17    \\ \hline \rule{0pt}{7pt}
10  &37.54 &110   &0.34  &40.82  &64 &0.64   &11.17 &35 &0.32     &\textbf{7.75}  &\textbf{32}  &\textbf{0.24}    \\  \hline \rule{0pt}{7pt}
12  &52.08 &111   &0.47  &67.52  &65 &1.04   &14.32 &35 &0.41     &\textbf{9.77}  &\textbf{32}  &\textbf{0.31}    \\  \hline \rule{0pt}{7pt}
14  &69.28 &111   &0.62  &109.57 &66 &1.66   &19.09 &35 &0.54    &\textbf{12.39}  &\textbf{32} &\textbf{0.39} \\ \hline \rule{0pt}{7pt}
16  &93.08 &111   &0.84  &142.68 &65 &2.20   &22.96 &34 &0.67    &\textbf{16.18} &\textbf{32} &\textbf{0.51}   \\ \hline \rule{0pt}{7pt}
18  &124.00 &111  &1.12  &185.63 &65 &2.85   &28.12 &35 &0.80    &\textbf{19.17} &\textbf{32} &\textbf{0.60}   \\ \hline \rule{0pt}{7pt}
20  &152.89 &111  &1.38  &263.19 &72 &3.65   &34.02 &35 &0.97    &\textbf{23.60} &\textbf{32} &\textbf{0.74}   \\ \hline \rule{0pt}{7pt}
Avg.&56.98  &110  &0.51  &85.68  &65 &1.28   &14.61 &31 &0.44  &\textbf{10.09}  &\textbf{30}  &\textbf{0.32}   \\
\hline
\end{tabular}
\end{center}
\end{table*}

\begin{table*}[t]
\begin{center}
\caption{Computational complexity on USPS}
\centering
\label{USPSCC}
\begin{tabular}{|c|c|c|c|c|c|c|c|c|c|c|c|c|}
\hline &\multicolumn{3}{|c|}{SSC} & \multicolumn{3}{|c|}{LRR} & \multicolumn{3}{|c|}{MFC$_01$} & \multicolumn{3}{|c|}{MFC$_02$}\\
\hline
K  &T(s)    &I    &T/I    &T(s)  &I  &T/I     &T(s) &I  &T/I      &T(s) &I  &T/I    \\  \hline \rule{0pt}{6pt}
2 &6.28  &108   &0.06   &2.54  &61 &0.04    &1.22 &\textbf{27}  &0.05     &\textbf{0.91}  &31 &\textbf{0.03}    \\ \hline \rule{0pt}{7pt}
3 &13.21 &110   &0.12   &6.33  &63 &0.10    &1.87 &\textbf{27}  &0.07     &\textbf{1.38}  &31 &\textbf{0.04}    \\  \hline \rule{0pt}{7pt}
4 &22.07 &110   &0.20  &12.24  &63 &0.19    &2.47 &\textbf{29}  &0.09     &\textbf{1.94}  &30 &\textbf{0.06}    \\  \hline \rule{0pt}{7pt}
5 &29.18 &111   &0.26  &21.52  &63 &0.34    &3.13 &\textbf{30}  &0.10     &\textbf{2.66}  &\textbf{30} &\textbf{0.08}    \\ \hline \rule{0pt}{7pt}
6 &38.06 &111   &0.34  &34.84  &64 &0.54    &4.36 &40 &0.11     &\textbf{3.76}  &\textbf{36}  &\textbf{0.10}    \\  \hline \rule{0pt}{7pt}
7 &54.53 &111   &0.49  &107.60 &65 &1.65    &5.27 &39 &0.14     &\textbf{4.36}  &\textbf{36}  &\textbf{0.12}    \\  \hline \rule{0pt}{7pt}
8 &64.16 &111   &0.58  &137.31 &65 &2.11    &6.64 &40 &0.17     &\textbf{5.64}  &\textbf{36} &\textbf{0.16} \\ \hline \rule{0pt}{7pt}
9  &71.58  &111   &0.64  &192.85 &65 &2.97    &8.80 &39 &0.22     &\textbf{7.12} &\textbf{35} &\textbf{0.20}   \\ \hline \rule{0pt}{7pt}
10  &93.97 &111   &0.85  &255.04 &65 &3.92    &9.90 &39 &0.25     &\textbf{8.48} &\textbf{35} &\textbf{0.24}   \\ \hline \rule{0pt}{7pt}
Avg.&43.67 &110   &0.39  &85.59  &64 &1.32    &4.85 &34 &0.13   &\textbf{4.03}  &\textbf{33}  &\textbf{0.12}   \\
\hline
\end{tabular}
\end{center}
\end{table*}


\subsubsection{Face Basis Learning and Reconstruction on AR}

Subspace basis learning aims to learn the representative basis that provide a compact representation for each data sample. Traditional methods assume that the face images of all subjects can be represented by a single lower-dimensional subspace. However, previous work~\cite{basri2003lambertian} revealed that, under the Lambertian assumption, face images of a subject with a fixed pose and varying illumination lie close to a linear subspace. Thus for multiple subjects, the reasonable way is to learn each subject its underlying basis. To confirm this argument, we conduct an experiment to visualize the learnt face basis (Note that SSC and LRR are unable to learn the basis, thus we do not show their results). Moreover, we also display the reconstructed face image and the associated error. To see the difference with single subspace learning methods, we choose PCA for comparison.

\begin{figure}[!hbt]
\begin{center}
\subfigure[]{\label{fig8:a}\includegraphics[width=0.48\linewidth]{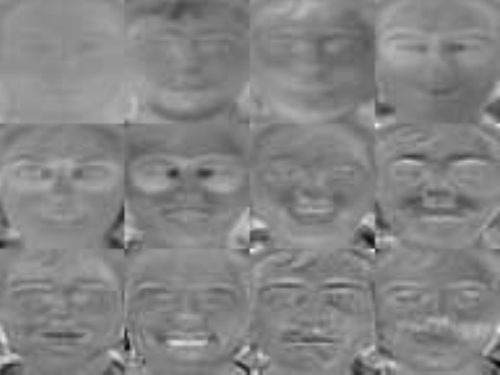}}
\subfigure[]{\label{fig8:b}\includegraphics[width=0.48\linewidth]{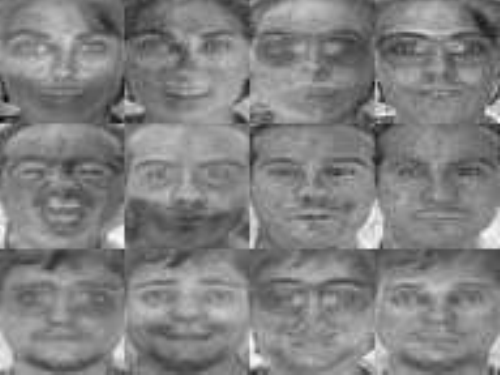}}
\subfigure[]{\label{fig8:c}\includegraphics[width=0.48\linewidth]{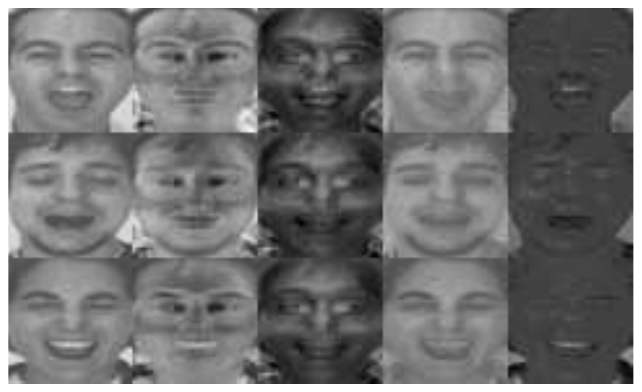}}
\subfigure[]{\label{fig8:d}\includegraphics[width=0.48\linewidth]{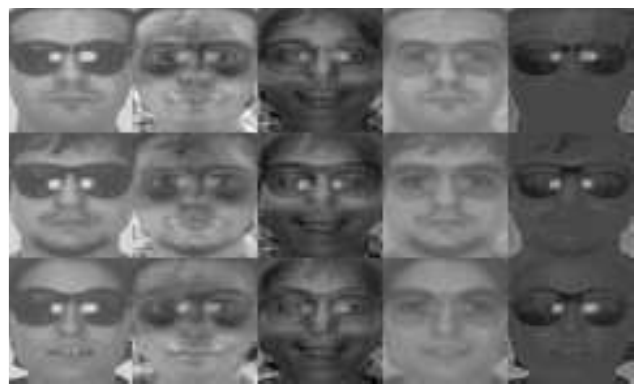}}
\end{center}
\caption{Some examples of learnt face basis, reconstructed face images, and errors of three subjects in AR. (a) and (b) are representative face basis obtained by PCA and MFC$_0$. The five columns in (c) and (d) are raw face images, reconstructed face images by PCA, errors corrected by PCA, reconstructed face images by MFC$_02$, and errors corrected by MFC$_02$, respectively. (c) shows results on some images with laughter, where ``teeth" is treated as the error. (d) show results on some images with wearing sunglass, where ``sunglass" is treated as the error. We do not show results on SSC and LRR as they are unable to learn the face basis.}
\label{fig8}
\vspace{-2mm}
\end{figure}


We randomly choose three subjects (subject 1, 2, and 55, each 15 samples) in AR to learn the basis, which are then used for face reconstruction. Since some face images in AR are grossly corrupted, for instance, wearing sunglass, we therefore treat them as sample-specific outliers and choose MFC$_0$2. Some examples of learnt face basis, reconstructed face images, and errors of PCA and MFC$_02$ are plotted in Figure $\ref{fig8}$. From it, we have several observations: 

(1) \textbf{MFC$_0$ can learn each subpace the corresponding basis}. The learnt basis via MFC$_02$ can be separated into three parts---basis in each row represent one subject, meaning face images of one subject lie on its own underlying subspace. In contrast, the basis of PCA are mixed together, i.e., each basis contains information of face images across the three subjects;

(2) \textbf{MFC$_0$ is effective in reconstructing face images}. The reconstructioned face images via MFC$_02$ are very clean, while they are blurry and untidy via PCA. 

(3) \textbf{MFC$_0$ is able to correct discriminant errors, which can further be used to perform recognition tasks}. The errors via MFC$_02$ are almost extracted. For Figure $\ref{fig8}$(c) and (d), the errors are  ``teeth"---One reasonable explanation is that face images in the training set do not show their teeth. Similarly, the errors in Figure $\ref{fig8}$(e) and (f) are ``sunglass". Interestingly, the errors contain discriminant features. In fact, they can be used for facial emotion recognition or object recognition~\cite{Liu:LLRR}. For instance, we can leverage these errors to detect whether one is laughing or wearing sunglass. Nevertheless, the errors obtained by PCA cannot provide any useful information.

\subsubsection{Summary} 
\begin{itemize}
\item MFC$_0$ (either MFC$_01$ or MFC$_02$) is not only effective for subspace clustering, subspace basis learning, but also implemented efficiently.

\item The advantage of MFC$_0$ against single subspace learning methods can be magnified when handling data generated from multiple subspaces. 

\item The advantage against SSC and LRR can be boosted by handling data contained gross errors or/and data with large sample size.
\end{itemize}

\section{Conclusions and Future Work}

In this paper, we propose a novel method, called Column $L_0$-norm constrained Matrix Factorization (MFC$_0$), for robustly analyzing the structure of data generated from multiple categories. By learning the basis with an orthonormal constraint, MFC$_0$ is able to discover the mixture subspace structure and is robust to different types of errors with the specific regularization. Moreover, MFC$_0$ directly imposes an $l_0$-norm constraint on the representation matrix, which achieves a (or approximate) block-diagonal structure when the data are clean (or contain errors). We propose a very efficient first-order alternating direction type algorithm to stably solve the nonconvex and nonsmooth objective function of MFC$_0$. Experimental results on synthetic data and real-world datasets verify that, besides the superiority and efficiency over traditional and state-of-the-art methods for multi-subspace recovery and clustering, MFC$_0$ also demonstrates its uniqueness for multi-subspace basis learning and direct representation learning.

Although MFC$_0$ owns these advantages, it also faces several problems that should be investigated in future. First, we assume that the multiple subspaces are independent, however in practice the data may be generated from disjoint subspaces. In this case, how can we incorporate the constraint or regularization into the model? More challenging, how can we process the data that are generated from nonlinear manifolds, instead of linear subspaces? Second, the computational complexity of MFC$_0$ is linear with respect to the sample size, so its extension to deal with massive dataset will be investigated in future.

\ifCLASSOPTIONcaptionsoff
  \newpage
\fi



%
\bibliographystyle{IEEEtran}
\bibliography{MFC0}

\begin{thebibliography}{10}
\providecommand{\url}[1]{#1}
\csname url@samestyle\endcsname
\providecommand{\newblock}{\relax}
\providecommand{\bibinfo}[2]{#2}
\providecommand{\BIBentrySTDinterwordspacing}{\spaceskip=0pt\relax}
\providecommand{\BIBentryALTinterwordstretchfactor}{4}
\providecommand{\BIBentryALTinterwordspacing}{\spaceskip=\fontdimen2\font plus
\BIBentryALTinterwordstretchfactor\fontdimen3\font minus
  \fontdimen4\font\relax}
\providecommand{\BIBforeignlanguage}[2]{{%
\expandafter\ifx\csname l@#1\endcsname\relax
\typeout{** WARNING: IEEEtran.bst: No hyphenation pattern has been}%
\typeout{** loaded for the language `#1'. Using the pattern for}%
\typeout{** the default language instead.}%
\else
\language=\csname l@#1\endcsname
\fi
#2}}
\providecommand{\BIBdecl}{\relax}
\BIBdecl

\bibitem{wang2015}
B.~Wang, R.~Liu, C.~Lin, and X.~Fan, ``Matrix factorization with column l0-norm
  constraint for robust multi-subspace analysis,'' in \emph{2015 IEEE ICDM
  Workshop}, 2015, pp. 1189--1195.

\bibitem{tenenbaum2000global}
J.~B. Tenenbaum, V.~De~Silva, and J.~C. Langford, ``A global geometric
  framework for nonlinear dimensionality reduction,'' \emph{science}, vol. 290,
  no. 5500, pp. 2319--2323, 2000.

\bibitem{roweis2000nonlinear}
S.~T. Roweis and L.~K. Saul, ``Nonlinear dimensionality reduction by locally
  linear embedding,'' \emph{science}, vol. 290, no. 5500, pp. 2323--2326, 2000.

\bibitem{belkin2003laplacian}
M.~Belkin and P.~Niyogi, ``Laplacian eigenmaps for dimensionality reduction and
  data representation,'' \emph{Neural computation}, vol.~15, no.~6, pp.
  1373--1396, 2003.

\bibitem{belhumeur1997eigenfaces}
P.~N. Belhumeur, J.~P. Hespanha, and D.~J. Kriegman, ``Eigenfaces vs.
  fisherfaces: Recognition using class specific linear projection,'' \emph{IEEE
  TPAMI}, vol.~19, no.~7, pp. 711--720, 1997.

\bibitem{he2005face}
X.~He, S.~Yan, Y.~Hu, P.~Niyogi, and H.-J. Zhang, ``Face recognition using
  laplacianfaces,'' \emph{IEEE TPAMI}, vol.~27, no.~3, pp. 328--340, 2005.

\bibitem{wang2014neighbourhood}
B.-H. Wang, C.~Lin, X.-F. Zhao, and Z.-M. Lu, ``Neighbourhood sensitive
  preserving embedding for pattern classification,'' \emph{IET IP}, vol.~8,
  no.~8, pp. 489--497, 2014.

\bibitem{weinberger2005distance}
K.~Q. Weinberger, J.~Blitzer, and L.~K. Saul, ``Distance metric learning for
  large margin nearest neighbor classification,'' in \emph{Advances in neural
  information processing systems}, 2005, pp. 1473--1480.

\bibitem{rao2010motion}
S.~Rao, R.~Tron, R.~Vidal, and Y.~Ma, ``Motion segmentation in the presence of
  outlying, incomplete, or corrupted trajectories,'' \emph{IEEE TPAMI},
  vol.~32, no.~10, pp. 1832--1845, 2010.

\bibitem{yan2006general}
J.~Yan and M.~Pollefeys, ``A general framework for motion segmentation:
  Independent, articulated, rigid, non-rigid, degenerate and non-degenerate,''
  in \emph{ECCV}.\hskip 1em plus 0.5em minus 0.4em\relax Springer, 2006, pp.
  94--106.

\bibitem{Liu:LLRR}
G.~Liu and S.~Yan, ``Latent low-rank representation for subspace segmentation
  and feature extraction,'' in \emph{IEEE ICCV}, 2011, pp. 1615--1622.

\bibitem{basri2003lambertian}
R.~Basri and D.~Jacobs, ``Lambertian reflectance and linear subspaces,''
  \emph{IEEE TPAMI}, vol.~25, no.~2, pp. 218--233, 2003.

\bibitem{tangstructure}
K.~Tang, R.~Liu, Z.~Su, and J.~Zhang, ``Structure-constrained low-rank
  representation,'' \emph{IEEE TNNLS}, vol.~25, no.~12, pp. 2167--2179, 2014.

\bibitem{lee1999learning}
D.~D. Lee and H.~S. Seung, ``Learning the parts of objects by non-negative
  matrix factorization,'' \emph{Nature}, vol. 401, no. 6755, pp. 788--791,
  1999.

\bibitem{Candes:RPCA}
E.~J. Cand{\`e}s, X.~Li, Y.~Ma, and J.~Wright, ``Robust principal component
  analysis?'' \emph{Journal ACM}, vol.~58, no.~3, p.~11, 2011.

\bibitem{liu2014learning}
R.~Liu, Z.~Lin, and Z.~Su, ``Learning markov random walks for robust subspace
  clustering and estimation,'' \emph{Neural Networks}, vol.~59, pp. 1--15,
  2014.

\bibitem{liu2010unsupervised}
G.~Liu, Z.~Lin, X.~Tang, and Y.~Yu, ``Unsupervised object segmentation with a
  hybrid graph model (hgm),'' \emph{IEEE TPAMI}, vol.~32, no.~5, pp. 910--924,
  2010.

\bibitem{vidal2005generalized}
R.~Vidal, Y.~Ma, and S.~Sastry, ``Generalized principal component analysis
  {(GPCA)},'' \emph{IEEE TPAMI}, vol.~27, no.~12, pp. 1945--1959, 2005.

\bibitem{candes2009exact}
E.~J. Cand{\`e}s and B.~Recht, ``Exact matrix completion via convex
  optimization,'' \emph{FOUND COMPUT MATH}, vol.~9, no.~6, pp. 717--772, 2009.

\bibitem{donoho2006most}
D.~L. Donoho, ``For most large underdetermined systems of linear equations the
  minimal l1-norm solution is also the sparsest solution,'' \emph{COMMUN PUR
  APPL MATH}, vol.~59, no.~6, pp. 797--829, 2006.

\bibitem{SSC:PAMI2014}
E.~Elhamifar and R.~Vidal, ``Sparse subspace clustering: Algorithm, theory, and
  applications,'' \emph{IEEE TPAMI}, vol.~35, no.~11, pp. 2765--2781, 2013.

\bibitem{LRR:PAMI2014}
G.~Liu, Z.~Lin, S.~Yan, J.~Sun, Y.~Yu, and Y.~Ma, ``Robust recovery of subspace
  structures by low-rank representation,'' \emph{IEEE TPAMI}, vol.~35, no.~1,
  pp. 171--184, 2013.

\bibitem{vidal2014low}
R.~Vidal and P.~Favaro, ``Low rank subspace clustering {(LRSC)},'' \emph{PRL},
  vol.~43, pp. 47--61, 2014.

\bibitem{liu2012fixed}
R.~Liu, Z.~Lin, F.~De~la Torre, and Z.~Su, ``Fixed-rank representation for
  unsupervised visual learning,'' in \emph{IEEE CVPR}, 2012, pp. 598--605.

\bibitem{ni2010robust}
Y.~Ni, J.~Sun, X.~Yuan, S.~Yan, and L.-F. Cheong, ``Robust low-rank subspace
  segmentation with semidefinite guarantees,'' in \emph{IEEE ICDMW}, 2010, pp.
  1179--1188.

\bibitem{Peng:2015:SCU}
C.~Peng, Z.~Kang, H.~Li, and Q.~Cheng, ``Subspace clustering using
  log-determinant rank approximation,'' in \emph{ACM SIGKDD}, ser. KDD
  '15.\hskip 1em plus 0.5em minus 0.4em\relax New York, NY, USA: ACM, 2015, pp.
  925--934.

\bibitem{liu2017blessing}
G.~Liu, Q.~Liu, and P.~Li, ``Blessing of dimensionality: Recovering mixture
  data via dictionary pursuit,'' \emph{IEEE TPAMI}, vol.~39, no.~1, pp. 47--60,
  2017.

\bibitem{candes2010matrix}
E.~J. Candes and Y.~Plan, ``Matrix completion with noise,'' \emph{Proc. of
  IEEE}, vol.~98, no.~6, pp. 925--936, 2010.

\bibitem{xu2010robust}
H.~Xu, C.~Caramanis, and S.~Sanghavi, ``Robust pca via outlier pursuit,'' in
  \emph{NIPS}, 2010, pp. 2496--2504.

\bibitem{boyd2011distributed}
S.~Boyd, N.~Parikh, E.~Chu, B.~Peleato, and J.~Eckstein, ``Distributed
  optimization and statistical learning via the alternating direction method of
  multipliers,'' \emph{Foundations and Trends{\textregistered} in Machine
  Learning}, vol.~3, no.~1, pp. 1--122, 2011.

\bibitem{parikh2013proximal}
N.~Parikh and S.~Boyd, ``Proximal algorithms,'' \emph{Foundations and Trends in
  optimization}, vol.~1, no.~3, pp. 123--231, 2013.

\bibitem{liu2014constrained}
H.~Liu, G.~Yang, Z.~Wu, and D.~Cai, ``Constrained concept factorization for
  image representation,'' \emph{IEEE Trans. Cybernetics}, vol.~44, no.~7, pp.
  1214--1224, 2014.

\bibitem{srebro2004maximum}
N.~Srebro, J.~Rennie, and T.~S. Jaakkola, ``Maximum-margin matrix
  factorization,'' in \emph{NIPS}, 2004, pp. 1329--1336.

\bibitem{ding2010convex}
C.~Ding, T.~Li, M.~Jordan \emph{et~al.}, ``Convex and semi-nonnegative matrix
  factorizations,'' \emph{IEEE TPAMI}, vol.~32, no.~1, pp. 45--55, 2010.

\bibitem{mairal2010online}
J.~Mairal, F.~Bach, J.~Ponce, and G.~Sapiro, ``Online learning for matrix
  factorization and sparse coding,'' \emph{JMLR}, vol.~11, pp. 19--60, 2010.

\bibitem{xu2003document}
W.~Xu, X.~Liu, and Y.~Gong, ``Document clustering based on non-negative matrix
  factorization,'' in \emph{ACM SIGIR}.\hskip 1em plus 0.5em minus 0.4em\relax
  ACM, 2003, pp. 267--273.

\bibitem{sandler2011nonnegative}
R.~Sandler and M.~Lindenbaum, ``Nonnegative matrix factorization with earth
  mover's distance metric for image analysis,'' \emph{IEEE TPAMI}, vol.~33,
  no.~8, pp. 1590--1602, 2011.

\bibitem{zafeiriou2006exploiting}
S.~Zafeiriou, A.~Tefas, I.~Buciu, and I.~Pitas, ``Exploiting discriminant
  information in nonnegative matrix factorization with application to frontal
  face verification,'' \emph{IEEE TNN}, vol.~17, no.~3, pp. 683--695, 2006.

\bibitem{wang2013graph}
B.~Wang, M.~Pang, C.~Lin, and X.~Fan, ``Graph regularized non-negative matrix
  factorization with sparse coding,'' in \emph{ChinaSIP}.\hskip 1em plus 0.5em
  minus 0.4em\relax IEEE, 2013, pp. 476--480.

\bibitem{fevotte2009nonnegative}
C.~F{\'e}votte, N.~Bertin, and J.-L. Durrieu, ``Nonnegative matrix
  factorization with the itakura-saito divergence: With application to music
  analysis,'' \emph{Neural computation}, vol.~21, no.~3, pp. 793--830, 2009.

\bibitem{cichocki2006new}
A.~Cichocki, R.~Zdunek, and S.-i. Amari, ``New algorithms for non-negative
  matrix factorization in applications to blind source separation,'' in
  \emph{IEEE ICASSP}, vol.~5, 2006, pp. V--V.

\bibitem{wang2013nonnegative}
Y.-X. Wang and Y.-J. Zhang, ``Nonnegative matrix factorization: A comprehensive
  review,'' \emph{IEEE TKDE}, vol.~25, no.~6, pp. 1336--1353, 2013.

\bibitem{eggert2004sparse}
J.~Eggert and E.~K{\"o}rner, ``Sparse coding and nmf,'' in \emph{IEEE IJCNN},
  vol.~4, 2004, pp. 2529--2533.

\bibitem{kim2007sparse}
H.~Kim and H.~Park, ``Sparse non-negative matrix factorizations via alternating
  non-negativity-constrained least squares for microarray data analysis,''
  \emph{Bioinformatics}, vol.~23, no.~12, pp. 1495--1502, 2007.

\bibitem{hoyer2004non}
P.~O. Hoyer, ``Non-negative matrix factorization with sparseness constraints,''
  \emph{JMLR}, vol.~5, pp. 1457--1469, 2004.

\bibitem{donoho2003optimally}
D.~L. Donoho and M.~Elad, ``Optimally sparse representation in general
  (nonorthogonal) dictionaries via ?1 minimization,'' \emph{PNAS}, vol. 100,
  no.~5, pp. 2197--2202, 2003.

\bibitem{bruckstein2009sparse}
A.~M. Bruckstein, D.~L. Donoho, and M.~Elad, ``From sparse solutions of systems
  of equations to sparse modeling of signals and images,'' \emph{SIAM review},
  vol.~51, no.~1, pp. 34--81, 2009.

\bibitem{vavasis2009complexity}
S.~A. Vavasis, ``On the complexity of nonnegative matrix factorization,''
  \emph{SIAM Journal on Optimization}, vol.~20, no.~3, pp. 1364--1377, 2009.

\bibitem{aharon2006img}
M.~Aharon, M.~Elad, and A.~Bruckstein, ``K-svd: An algorithm for designing
  overcomplete dictionaries for sparse representation,'' \emph{IEEE TSP},
  vol.~54, no.~11, pp. 4311--4322, 2006.

\bibitem{tropp2007signal}
J.~Tropp, A.~C. Gilbert \emph{et~al.}, ``Signal recovery from random
  measurements via orthogonal matching pursuit,'' \emph{IEEE TIT}, vol.~53,
  no.~12, pp. 4655--4666, 2007.

\bibitem{peharz2012sparse}
R.~Peharz and F.~Pernkopf, ``Sparse nonnegative matrix factorization with
  l0-constraints,'' \emph{Neurocomputing}, vol.~80, pp. 38--46, 2012.

\bibitem{wang2014hierarchical}
B.~Wang, C.~Lin, X.~Fan, N.~Jiang, and D.~Farina, ``Hierarchical bayes based
  adaptive sparsity in gaussian mixture model,'' \emph{Pattern Recognition
  Letters}, vol.~49, pp. 238--247, 2014.

\bibitem{yang2009fast}
J.~Yang, W.~Yin, Y.~Zhang, and Y.~Wang, ``A fast algorithm for edge-preserving
  variational multichannel image restoration,'' \emph{SIAM Journal on Imaging
  Sciences}, vol.~2, no.~2, pp. 569--592, 2009.

\bibitem{murphy2012machine}
K.~P. Murphy, \emph{Machine learning: a probabilistic perspective}.\hskip 1em
  plus 0.5em minus 0.4em\relax MIT press, 2012.

\bibitem{shi2000normalized}
J.~Shi and J.~Malik, ``Normalized cuts and image segmentation,'' \emph{IEEE
  TPAMI}, vol.~22, no.~8, pp. 888--905, 2000.

\bibitem{wright2009robust}
J.~Wright, A.~Y. Yang, A.~Ganesh, S.~S. Sastry, and Y.~Ma, ``Robust face
  recognition via sparse representation,'' \emph{IEEE TPAMI}, vol.~31, no.~2,
  pp. 210--227, 2009.

\bibitem{plummer1986matching}
M.~D. Plummer and L.~Lov{\'a}sz, \emph{Matching theory}.\hskip 1em plus 0.5em
  minus 0.4em\relax Elsevier, 1986.

\end{thebibliography}

\appendix
\counterwithin{theorem}{section}

\subsection{Proof of Theorem~\ref{theorem_1}}
\label{app:theorem1} 
Let $\mathbf{Y}^{\star}$ be an optimal solution of Eq.($\ref{a10}$) under the given basis $\mathbf{X}$. We can then decompose $\mathbf{Y}^{\star}$ into two parts, i.e., the diagonal $\mathbf{Y}^{D}$ and non-diagonal $\mathbf{Y}^{N}$, where
\begin{equation*}
\mathbf{Y}^{D} =
\left[
     \begin{array}{cccc}
    \mathbf{\hat{Y}}_1^{\star} & \mathbf{0} & \cdots & \mathbf{0} \\
    \mathbf{0} & \mathbf{\hat{Y}}_2^{\star} & \cdots & \mathbf{0} \\
    \vdots   & \ddots & \vdots & \vdots \\
    \mathbf{0} & \mathbf{0}&  \cdots & \mathbf{\hat{Y}}_K^{\star} \\
  \end{array}
\right],
\mathbf{Y}^{N} =
\left[
     \begin{array}{cccc}
    \mathbf{0} & \star & \cdots & \star \\
    \star & \mathbf{0} & \cdots & \star \\
    \vdots   & \ddots & \vdots & \star \\
    \star &  \star & \cdots & \mathbf{0} \\
  \end{array}
\right]
\end{equation*}
with $\mathbf{\hat{Y}}_k^{\star} \in \Re^{d_0 \times n_k}$ and $\langle \mathbf{Y}^{D}, \mathbf{Y}^{N} \rangle = 0$.

Let $\mathbf{z}^{\star}_t, \mathbf{z}^{D}_t, \mathbf{z}^{N}_t$ respectively denote the $t$-th column of $\mathbf{X}\mathbf{Y}^{\star}, \mathbf{X}\mathbf{Y}^{D}, \mathbf{X}\mathbf{Y}^{N}$. Suppose that $\mathbf{z}^{\star}_t \in \mathcal{S}_k $. It is easy to check $\mathbf{z}^D_t \in \mathcal{S}_k$, therefore $\mathbf{z}^{N}_t \in \sum_{j \neq k} \mathcal{S}_j$. However, $\mathbf{z}^{N}_t = \mathbf{z}^{\star}_t - \mathbf{z}^{D}_t \in \mathcal{S}_k$. As all subspaces are independent, meaning $\mathcal{S}_k \cap \sum_{j \neq k} \mathcal{S}_j = \{ 0 \}$, we must require $\mathbf{z}^{N}_t = 0$. Thus $\mathbf{X}\mathbf{Y}^{N} = 0$, and $\mathbf{Z} = \mathbf{X}\mathbf{Y}^D$, $\mathbf{Y}^{\star}=\mathbf{Y}^D$. Therefore, the solution $\mathbf{Y}^{\star}$ to Eq.($\ref{a11}$) is block-diagonal.


\subsection{Proof of Theorem~\ref{theorem_2}}
\label{app:theorem2} 
We first unfold the objective function:
\begin{equation*}
\footnotesize
\min \left\| \mathbf{A}-\mathbf{DB} \right\|_F^2 = \textbf{tr}(\mathbf{A^TA}) - 2\textbf{tr}(\mathbf{AB^TD^T}) + \textbf{tr}(\mathbf{B^TD^TDB}).
\end{equation*}

Using the constraint $\mathbf{D^TD}=\mathbf{I}$ and the SVD of $\mathbf{AB^T}= \mathbf{L \Sigma R^T}$ transforms above minimization to the maximization problem
\begin{equation*}
\max \textbf{tr}(\mathbf{AB^TD^T}) = \textbf{tr}(\mathbf{L \Sigma R^T D^T}) = \textbf{tr}(\mathbf{\Sigma (DR)^T L}).
\end{equation*}
Note that $\left\| \mathbf{(DR)^T L} \right\|_F^2 = \textbf{tr}(\mathbf{(DR)^T L L^T (DR)}) = \textbf{tr}(\mathbf{I}_d)$ is a constant. As $\mathbf{\Sigma}$ is a diagonal matrix, so the maximum is achieved when $\mathbf{(DR)^T L}$ is also a diagonal matrix, with positive diagonal elements. Therefore, we have $\mathbf{DR} = \mathbf{L}$, and thus $\mathbf{D}=\mathbf{LR^T}$.

\subsection{Proof of Theorem~\ref{theorem_3}}
\label{app:theorem3}
For any $\mathbf{u}_i$, we first split its elements into the negative and positive parts, with corresponding indices defined as
$\mathcal{I}_+ = \left\{ j | u_{i,j} \geq 0\right\}$ and $\mathcal{I}_- = \left\{ j | u_{i,j} < 0 \right\}$. Denote $\| \mathbf{u}_i\|_+^2 = \sum_{j \in \mathcal{I}_+} u_{i,j}^2$ and $\| \mathbf{u}_i\|_-^2 = \sum_{j \in \mathcal{I}_-} u_{i,j}^2$. Then, we have two useful properties: 

(1) $\| \mathbf{u}_i \|_2^2 = \| \mathbf{u}_i \|_+^2 + \| \mathbf{u}_i \|_-^2$; 

(2) $\| \mathbf{v}_i - \mathbf{u}_i \|_+^2 + \| \mathbf{v}_i\|_-^2 = \| \mathbf{v}_i- \mathrm{P}_{+}(\mathbf{u}_i) \|_2^2$.

Property (1) is obvious and (2) can be simply proved: 
\begin{align*}
& \| \mathbf{v}_i- \mathrm{P}_{+}(\mathbf{u}_i) \|_2^2 = \| \mathbf{v}_i- \mathrm{P}_{+}(\mathbf{u}_i) \|_+^2 + \| \mathbf{v}_i- \mathrm{P}_{+}(\mathbf{u}_i) \|_-^2 \\
& \quad = \| \mathbf{v}_i- \mathbf{u}_i \|_+^2 + \| \mathbf{v}_i- 0 \|_-^2 = \| \mathbf{v}_i - \mathbf{u}_i \|_+^2 + \| \mathbf{v}_i\|_-^2.
\end{align*}

Based on above properties, we have
\begin{align*}
\label{prox_oper}
    \mathrm{prox}_{I_\mathcal{V}}(\mathbf{u}_i) &= \argmin_{\mathbf{v}_i} \left\{ || \mathbf{u}_i - \mathbf{v}_i ||_2^2: \mathbf{v}_i \geq 0, ||\mathbf{v}_i||_0 = d_0\right\} \\
    & = \argmin_{\mathbf{v}_i \in \mathcal{V}} \left\{ || \mathbf{u}_i - \mathbf{v}_i ||_2^2 \right\}  \\
    & = \argmin_{\mathbf{v}_i \in \mathcal{V}} \big\{ || \mathbf{u}_i - \mathbf{v}_i ||_+^2 + || \mathbf{u}_i - \mathbf{v}_i ||_-^2 \big\} \\
    & \Leftrightarrow \argmin_{\mathbf{v}_i \in \mathcal{V}} \left\{ || \mathbf{u}_i - \mathbf{v}_i ||_+^2 + || \mathbf{v}_i ||_-^2 - 2\sum_{j \in \mathcal{I}_-} v_{i,j} u_{i,j} \right\} \\
    & = \argmin_{\mathbf{v}_i \in \mathcal{V}} \left\{ || \mathbf{v}_i - \mathrm{P}_{+}(\mathbf{u}_i) ||_2^2 - 2\sum_{j \in \mathcal{I}_-} v_{i,j} u_{i,j} \right\}
\end{align*}
Note that we remove the constant $|| \mathbf{u}_i||_-^2 $ in the fourth equation. 

In the last equation, when $j \in \mathcal{I}_-$, $u_{i,j} <0$. As $v_{i,j} \geq 0$, so $-2\sum_{j \in \mathcal{I}_-} v_{i,j} u_{i,j} \geq 0$. Therefore, the minimum is achieved if and only if $v_{i,j} = 0$ for $j \in \mathcal{I}_-$, resulting in $||\mathbf{v}_i||_-^2 = 0$. Finally, we have 
\begin{equation*}
    \begin{split}
    \mathrm{prox}_{I_\mathcal{V}}(\mathbf{u}_i)
    & = \argmin_{\mathbf{v}_i} \left\{ || \mathbf{v}_i - \mathrm{P}_{+}(\mathbf{u}_i) ||_2^2: ||\mathbf{v}_i||_0 = d_0 \right\} \\
    & = \mathrm{P}_{d_0}(\mathrm{P}_{+}(\mathbf{u}_i)).
    \end{split}
\end{equation*}
\end{document}